\DeclareMathOperator*{\argmin}{arg\,min}
\newtheorem{theorem}{Theorem}
\newtheorem{corollary}{Corollary}
\newtheorem{lemma}{Lemma}
\newtheorem{assumption}{Assumption}
\newtheorem{definition}{Definition}
\def\ba{{\boldsymbol{a}}}
\def\bx{{\boldsymbol{x}}}
\def\bv{{\boldsymbol{v}}}
\def\bu{{\boldsymbol{u}}}
\def\bA{{\boldsymbol{A}}}
\def\bX{{\boldsymbol{X}}}
\def\bY{{\boldsymbol{Y}}}
\def\bU{{\boldsymbol{U}}}
\def\bV{{\boldsymbol{V}}}
\def\bW{{\boldsymbol{W}}}
\def\Ocal{{\mathcal{O}}}
\def\gtil{{\tilde{\nabla}}}
\DeclareMathOperator*{\expect}{\mathbb{E}}
\newcommand{\shortrnote}[1]{ &  &  & \text{\footnotesize\llap{#1}}}
\title{Communication-Efficient Asynchronous Stochastic Frank-Wolfe \\ over Nuclear-norm Balls}
\author{Jiacheng Zhuo, Qi Lei, Alexandros G. Dimakis, Constantine Caramanis
\\
University of Texas at Austin \\
{\tt\small \{jzhuo@, leiqi@ices., dimakis@austin., constantine@\}utexas.edu}
}
\begin{document}

%

%

\maketitle

\begin{abstract}
Large-scale machine learning training suffers from two prior challenges, specifically for nuclear-norm constrained problems with distributed systems: the synchronization slowdown due to the straggling workers, and high communication costs.
In this work, we propose an asynchronous Stochastic Frank Wolfe (SFW-asyn) method, which, for the first time, solves the two problems simultaneously, while successfully maintaining the same convergence rate as the vanilla SFW. 
We implement our algorithm in python (with MPI) to run on Amazon EC2, 
and demonstrate that SFW-asyn yields speed-ups almost linear to the number of machines compared to the vanilla SFW. 

\end{abstract}

\section{Introduction}

We consider the problem of minimizing a convex and smooth matrix function 
subject to a nuclear norm constraint:
\begin{equation}
    \min_{\|\bX\|_* \leq \theta} F(\bX) = \frac{1}{N}\sum_i^N f_i (\bX),
\label{eqn:main-problem}
\end{equation}
where $F$  maps from $\mathbb{R}^{D_1 \times D_2}$ to $\mathbb{R}$, and $\|\bX\|_*$ denotes the 
nuclear norm of $\bX$, i.e., the sum of its singular values. We view the objective $F$ as a summation of $N$ sub-problems $f_1,f_2,\cdots f_N$. This formulation covers various important machine learning applications, including matrix completion, matrix sensing, multi-class and multi-label  classification, affine rank minimization, phase retrieval problems, and many more (see e.g. \citep{candes2015phase, allen2017bfw} and references therein). We are especially interested in the 
large sample setting, i.e., when the total number of subproblems (data examples) $N$ and model size $D_1, D_2$ are large.

Projection-free algorithms are frequently used for solving problem \ref{eqn:main-problem}. As first proposed by Frank and Wolfe \citep{frank1956algorithm}, 
one can compute and update the parameters along the gradient descent direction 
while remaining within the constraint set. This is thus called the Frank-Wolfe (FW) algorithm or conditional gradient descent.  The nuclear norm constraint requires the computation of the leading left and right singular vectors of the gradient matrix, with complexity of $\Ocal(D_1D_2)$. A natural alternative among first-order algorithms is Projected Gradient Descent (PGD). The projection step, however, requires a full SVD per iteration, with much higher complexity: $\mathcal{O}(D_1 D_2 \cdot \min(D_1, D_2))$. This makes PGD computationally expensive for  large-scale datasets. 

Classical FW runs on a single machine and passes the whole dataset in each iteration. 
As datasets increase and Moore's Law is slowing down \citep{simonite2016moore}, the move towards stochastic variants of FW has become imperative \citep{hazan2016variance}. In the meantime, it is also crucial to study distributed FW implementations, where gradient computation and aggregation is parallelized across multiple worker nodes \citep{zheng2018distributed, bellet2015distributed}. Although distributed computation boosts the amount of data that can be processed per iteration, it may introduce prohibitive communication overhead, particularly in the big data setting with high-dimensional parameters. Specifically, there are two challenges: (1) the stragglers dominate each iteration due to synchronization \citep{liu2015asynchronous, passcode15, reddi2015variance, recht2011hogwild, tandon2017gradient}; (2) there is a large communication cost per iteration ($\Ocal(D_1 D_2)$) due to gradient and parameter sharing. 
For problem (2), (stochastic) FW is natural and appealing: the low-rank updates can be represented as a few vectors and they require fewer bits to transmit than the partial gradients. However, naively aggregating the low-rank updates from the workers does not yield an algorithm that converges, as the Singular Vector Averaging algorithm in the work of \citep{zheng2018distributed}. While natural and promising, there are several significant technical obstacles towards designing an efficient and provably convergent distributed FW algorithm. To the best of our knowledge, no prior work has successfully addressed this. 

We thus raise and answer the following two questions in this work:

\textit{ (1) Can FW type algorithms \textbf{provably} run in an asynchronous (lock-free) manner, so that the iteration time is not dominated by the slowest worker?} 

\textit{ (2) Can we improve the communication cost of running FW in the distributed setting?} 

In this work, we are able to answer both of these questions affirmatively, by proposing and analyzing an asynchronous version of Stochastic Frank-Wolfe algorithm, which we call \textbf{SFW-asyn}.

\subsection*{\bf Our contributions.}

We propose a Stochastic Frank-Wolfe algorithm (SFW-asyn) that is communication-efficient and runs asynchronously. Apart from the lock-free benefit, SFW-asyn reduces the communication cost from $\Ocal(D_1 D_2)$ as in most prior work, to $\Ocal(D_1 + D_2)$. 

Moreover, we establish that SFW-asyn (Algorithm \ref{alg:SFW-asyn}) enjoys a $\mathcal{O}(1/k)$ convergence rate, which is in line with the standard SFW.

We further analyze how vanilla SFW and SFW-asyn perform when the mini-batch size is set to a constant, while the previous analysis on SFW requires an increasing mini-batch size that is quadratic in the iteration number \citep{hazan2016variance}.

Our theoretical analysis framework is general for other Frank-Wolfe type method. We specifically provide the extension for Stochastic Variance Reduced Frank-Wolfe (SVRF) \citep{hazan2016variance}.

Finally, we use Amazon EC2 instances to provide extensive performance evaluation of our algorithm, and comparisons to vanilla SFW and distributed SFW on two tasks, matrix sensing and learning polynomial neural networks. Our results show that with our communication-efficient and asynchronous nature, we achieve speed-ups almost linear in the number of distributed machines. 

\subsection*{\bf Related work.}

We mainly overview two different lines of work that are respectively related to the distributed Frank-Wolfe algorithms, and the asynchronous manner in distributed learning. 

{\bf Distributed Frank-Wolfe Algorithms.}
For general network topology, 
\cite{bellet2015distributed} proposes a distributed Frank-Wolfe algorithm for $\ell_1$ and simplex constraints, but not for the nuclearn-norm constraint. 

On a master-slave computational model, \cite{zheng2018distributed} proposes to distribute the exact gradient computation among workers and aggregate the gradient by performing $\Ocal(t)$ distributed power iterations at iteration $t$. This involves even more frequent synchronizations than the vanilla distributed Frank-Wolfe method 
and will be hindered heavily by staleness. Specifically, to run $T$ iterations, the distributed FW algorithm proposed by \citep{zheng2018distributed} requires a total communication cost of $\Ocal(T^2(D_1 + D_2)$, while SFW-asyn only requires $\Ocal(T(D_1 + D_2))$. Furthermore, since we consider the case when data samples are very large, stochastic optimization is much more efficient than exact gradient computation, and the whole sample set might not fit into memory. Therefore this work and other full-batch FW methods are beyond the interests of our paper.

Quantization techniques have also been used to reduce the communication cost for SFW, like \citep{zhang2019quantized}, who proposed a novel gradient encoding scheme (s-partition) to reduce iteration communication cost. However their results are for $\ell_1$ constrained problem, and are thus not directly comparable to our results. 

{\bf Asynchronous Optimization.}  
There is a rich previous effort in the research on asynchronous optimization, such as asynchronous Stochastic Gradient Descend (SGD) \citep{recht2011hogwild}, asynchronous Coordinate Descend (CD) \citep{liu2015asynchronous, passcode15}, and asynchronous Stochastic Variance Reduced Gradient (SVRG) \citep{reddi2015variance}. \cite{mania2015perturbed} 
proposes a general analysis framework, which however, is not applicable for SFW. 
\cite{wang2014asynchronous} 
proposes Asynchronous Block Coordinate FW and achieves sub-linear rate. 
They assume the parameters are separable by coordinates, and hence different from our problem setting.

{\bf A concurrent work.}
We acknowledge a concurrent but different work \cite{guasynchronous} that also studies asynchronous SFW.
Firstly we have different problem settings - we study how to minimize matrix to scalar functions (problem \eqref{eqn:main-problem}) under master-slave distributed computational setting, while they consider vector to scalar functions under the shared memory model.
For the algorithm, We propose an novel algorithm that is simultaneously straggler-resilient and communication efficient, while they only tackle the straggler problem.
For the analysis, our work further studies the convergence under any constant batch-size which is more practical.
We also show that how staleness may `help' by reducing the batch-size per iteration, enabling us to match or even improve the overall complexity of vanlilla SFW, which is not presented in this concurrent work.










\section{Preliminary}
\subsection{Frank-Wolfe method}
We start by reviewing the classical (Stochastic) Frank-Wolfe (FW/SFW) algorithms. Consider a general constrained optimization problem $\min_{\bX \in \Omega} \left\{F(\bX) := \frac{1}{N}\sum_i^N f_i (\bX)\right\}$. FW proceeds in each iteration by computing:
\begin{align}
    \bU_k &= \argmin_{\bU \in \Omega} \quad \langle  \nabla F(\bX_{k-1}), \bU \rangle,
    \label{eqn:FW_linearopt}\\
    \bX_k &= (1 - \eta_k) \bX_{k-1} + \eta_k \bU_k.
    \label{eqn:FW_update}
\end{align}
and SFW replaces the full batch gradient by a mini-batch gradient
\begin{align}
    \bU_k &= \argmin_{\bU \in \Omega} \quad \left \langle   \frac{1}{|\mathcal{S}_k|} \sum_{i\in \mathcal{S}_k}\nabla f_i (\bX), \bU \right \rangle, 
    \label{eqn:SFW_linearopt}\\
    \bX_k &= (1 - \eta_k) \bX_{k-1} + \eta_k \bU_k.
    \label{eqn:SFW_update}
\end{align}
where $\mathcal{S}_k$ is a randomly selected index set. We will refer to Eqns \eqref{eqn:FW_linearopt} and \eqref{eqn:SFW_linearopt} as linear optimization steps.

The complexity of the Frank-Wolfe type methods depends on that of the linear optimization steps. Fortunately, for a wide class of constraints $\Omega$, the linear optimization, with the projection-free nature, is of relatively low computational complexity. When $\Omega$ is $\| \bX \|_* \leq 1$ for instance, the linear optimization step is to compute the leading left singular vector $\bu$ and the right singular vector $\bv$ of the negative gradient (or mini-batch gradient), and return $\bu \bv^T$.

\subsection{Computational Model}
\begin{figure}[ht]
\begin{center}
\includegraphics[width=0.35\linewidth]{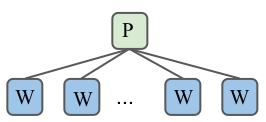}
\includegraphics[width=0.38\linewidth]{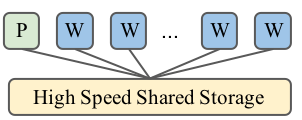}
\end{center}
\caption{{\em Illustration on different distributed frameworks.} `P' stands for Parameter-server, and `W' denotes workers. We focus on the master-slave paradigm (left), where all the workers are directly connected to the master node. 
Our algorithms and analysis can be easily modified for a shared-memory model (right), where all the machines have high speed access to one storage device, instead of being connected by network.}
\label{fig:comp_arch} 
\end{figure}

In this work we focus on the master-slave computational model, which is commonly used for distributed applications \citep{li2014scaling}. In the master-slave model, we have $W$ worker (slave) nodes and one Master node (parameter server). Each worker has access to all the data and therefore is able to compute $f_i(\bX)$ and $\nabla f_i(\bX)$ when $\bX$ is given. The Master node has direct connection to each worker node and serves as the central coordinator by maintaining the model parameters. 

Although synchronous framework is commonly used for iterative algorithms as they require minimal modification of the original algorithms, the running time of each iteration is confined by the slowest worker \citep{li2014communication, dean2012large}. This is why {\em asynchronous distributed algorithms} are desirable: 
each machine just works on their assigned computations, and communicates by sending signals or data asynchronously \citep{recht2011hogwild,liu2015asynchronous, passcode15}. The benefits of asynchronous algorithms are obvious: the computational power for each machine is fully utilized, since no waiting for the straggling workers is necessary. Unfortunately, the asynchronous version of an iterative algorithm may suffer from the effect of staleness.  
\begin{definition}
In the asynchronous computational model, workers may return stale gradients that were evaluated at an older version of the model $\bX$; we call that there is a staleness of $\tau$ if $\bX_{t+1}$ is updated according to $\bX_{t-\tau}$.
\end{definition}
\begin{figure}[H]
\begin{center}
\includegraphics[width=0.60\linewidth]{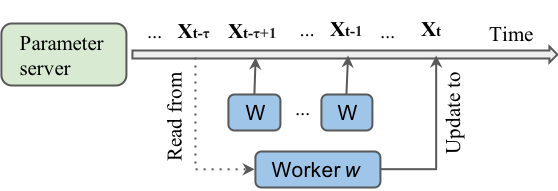}
\end{center}
\caption{Delay (staleness) for a worker $w$ is the number of updates by other workers between two updates of $w$. }
\label{fig:staleness} 
\end{figure}
As shown in Figure \ref{fig:staleness}, staleness happens when a worker tries to update the parameter server with out-dated information. In other words, when each worker deals with its own computation, other workers may have updated the model of the parameter server already. Staleness on a worker $w$ during $t$ execution cycle is a series of $t$ random variables, and we call this \textit{staleness process} for the worker $w$.  We require mutual independence on the staleness process, and the independence between the staleness process among workers and the sampling process of a worker.
\begin{assumption}
The staleness process of worker $w_i$ and the staleness process of worker $w_j$ are independent, for all $i \neq j$.
\label{assumption:independent1}
\end{assumption}
\begin{assumption}
The staleness process of worker $w$ is independent with the sampling process (for the stochastic gradient computation) on the worker $w$.
\label{assumption:independent2}
\end{assumption}
These two assumptions are standard in the analysis of asynchronous algorithms \citep{mitliagkas2016asynchrony}.

Although we focus on the master-slave computational model, our algorithm and analysis can be easily modified for the shared-memory computational model \citep{recht2011hogwild}, where all the workers have high speed access to one shared storage device. A typical example is a server with multiple CPUs, and these CPUs have access to a same piece of memory. 
In the shared-memory model, communication is no longer an issue. Our proposal mostly benefits from the lock-free nature under this setting. 

\subsection{Notation}
We use bold upper case letter for matrices, e.g. $\bX$, and bold lower case letter for vectors, e.g. $\bx$. We define the matrix inner product as $\langle \bX, \bY \rangle = \textrm{trace} (\bX^T \bY)$.
A function $F$ is convex on set $\Omega$ if
    $$F(\bY) \geq F(\bX) + \langle \nabla F(\bX), \bY-\bX \rangle, \forall \bX, \bY \in \Omega.$$
Similarly, $F$ is $L$-smooth on set $\Omega$ when 
$$   \| \nabla F(\bX) - \nabla F(\bY)\| \leq L \| \bX - \bY\|,\forall \bX, \bY \in \Omega.$$
We use $F^*$ to denote $\min_{\|\bX\|_* \leq \theta} F(\bX)$, namely the minimal value that $F$ can achieve under the constraints. Note there could be multiple optimal $\bX$, and hence we use $\bX^*$ to denote one of them, unless otherwise specified.

Specifically for Problem \ref{eqn:main-problem}, define $D$ as the diameter of the constraint set: $D = \max_{\|\bX\|_* \leq \theta, \|\bY\|_* \leq \theta} \| X - Y\|_F$, and define $G$ as the variance of the stochastic gradient: $G^2 \leq \expect \| \nabla f_i (\bX) - \nabla F(\bX) \|_F^2$, for all $\|\bX\|_* \leq \theta$.

\section{Methodology}
\label{sec:algo}
\begin{algorithm}[t]
\caption{Distributed Stochastic Frank-Wolfe (SFW-dist) (A baseline method)}
\begin{algorithmic}[1]
\STATE \textbf{Input:} Max iteration count $T$; Step size $\eta_t$ and batch size $m_t$ for iteration $t$.
\STATE \textbf{Initialization:} Random $X_0$ s.t. $\|\bX_0\|_*=1$.
\FOR{iteration $k=1,2,\cdots, T$}
    \STATE
    Broadcast $\bX_{k-1}$ to all workers.
    \FOR{ each worker $w=1,2,\cdots, W$}
        \STATE
        Randomly sample an index set $S_w$ where $|S_w| = m_k / W$
        \STATE 
        Compute and send $ \sum_{i \in S_w} \nabla f_i (\bX_{k-1}) $  to the master
    \ENDFOR
    \STATE
    $\nabla_k = \sum_{w=1}^W \sum_{i \in S_w} \nabla f_i (\bX_{k-1})$
    \STATE
    $\bU_k \leftarrow \textrm{argmin}_{\|\bU\|_*\leq \theta} \langle \nabla_k , \bU \rangle$
    \STATE
    $\bX_{k} \leftarrow \eta_{k} \bU_k + (1-\eta_k) \bX_{k-1}$
\ENDFOR
\end{algorithmic}
\label{alg:SFW-dist}
\end{algorithm}

\begin{algorithm}[t]
\caption{Naive Asynchronous Stochastic Frank-Wolfe Method (Only for analysis)}
\begin{algorithmic}[1]
\STATE //  The Master Node 
\STATE \textbf{Input:} Max delay tolerance $\tau$; Max iteration count $T$; Step size $\eta_t$ and batch size $m_t$ for iteration $t$.
\STATE \textbf{Initialization:} Random $X_0$ s.t. $\|\bX_0\|_*=1$; broadcast $X_0$ to all the workers; the iteration count at the master node $t_m = 0$.
\WHILE{$t_m < T$}
    \STATE
    Wait until $\{\bU_w, t_w\}$ is received from a worker $w$.
    \STATE
    // $\bU_w$ is computed by worker $w$ according to $\bX_{t_w}$
    \STATE
    If $t_m - t_w > \tau$, abandon $\bU_w$ and continue.
    \STATE
    $t_m = t_m + 1$
    \STATE
    $\bX_{t_m} \leftarrow \eta_{t_m} \bU_{w} + (1-\eta_{t_m}) \bX_{t_m - 1}$
    \STATE
    // Update $\bX$ in the master node with $\bU_{t_w}$.
    \STATE
    Broadcast $\bX_{t_m}$ to all the workers
\ENDWHILE
\STATE // For each worker $w=1,2,\cdots, W$ 
\WHILE{No Stop Signal}
    \STATE 
    Receive from the Master $\bX_{t_m}$.
    \STATE
    // $t_m$ is the iteration count at the master node.
    \STATE
    $t_w = t_m$, $\bX_w = \bX_m$.
    \STATE
    // Update the local copy of $\bX$ and iteration count.
    \STATE
    Randomly sample an index set $S$ where $|S| = m_{t_w}$
    \STATE 
    $\bU_w \leftarrow \textrm{argmin}_{\|\bU\|_*\leq \theta} \langle \sum_{i \in S} \nabla f_i (\bX_w), \bU \rangle$ 
    \STATE
    Send $\{\bU_w, t_w\}$ to the Master node.
\ENDWHILE
\end{algorithmic}
\label{alg:SFW-asyn-naive}
\end{algorithm}

\begin{algorithm}[t]
\caption{Asynchronous Stochastic Frank-Wolfe Method (SFW-asyn)}
\begin{algorithmic}[1]
\STATE //  The Master Node
\STATE \textbf{Input:} Max delay tolerance $\tau$; Max iteration count $T$; Step size $\eta_t$ and batch size $m_t$ for each iteration $t$;
\STATE \textbf{Initialization:} Randomly initialize $\bX_0$ = $\bu_0 \bv_0^T$ s.t. $\|\bX_0\|_*=1$ and broadcast $\{\bu_0, \bv_0\}$ to all the workers; The iteration count at the master node $t_m = 0$.
\WHILE{$t_m < T$}
    \STATE
    Wait until $\{\bu_w, \bv_w, t_w\}$ is received from a worker $w$.
    // $\bu_w, \bv_w$ are computed according to $\bX_{t_w}$.
    \IF{$t_m - t_w > \tau$}
        \STATE
        Send $(\bu_{t_m}, \bv_{t_m}), ... (\bu_{t_w+1}, \bv_{t_w+1})$ to node $w$.
        \STATE
        \textbf{continue}.
    \ENDIF
    \STATE
    $t_m = t_m + 1$ and store $\{ \bu_w, \bv_w\}$ as $\bu_{t_m}$ and $\bv_{t_m}$ 
    \STATE
    Send $(\bu_{t_m}, \bv_{t_m}), ... (\bu_{t_w+1}, \bv_{t_w+1})$ to node $w$.
    \STATE
    $\bX_k \leftarrow \eta_k \bu \bv^T + (1-\eta_k) \bX_{k-1}$ 
    // Not run in real time;
    maintain a copy of $\bX_k$ for output only
\ENDWHILE
\STATE // For each worker $w=1,2,\cdots, W$ 
\WHILE{No Stop Signal}
    \STATE 
    Obtain $(\bu_{t_m}, \bv_{t_m}), ... (\bu_{t_w+1}, \bv_{t_w+1})$ from the master node.
    \STATE 
    Update the local copy of $\bX_{t_w}$ to $\bX_{t_m}$ 
    \STATE 
    // According to Eqn.\eqref{eqn:update_to_k};
    \STATE
    Update the local iteration count $t_w = t_m$.
    \STATE
    Randomly sample an index set $S$ where $|S| = m_{t_w}$
    \STATE 
    $\bu_w \bv_w^T \leftarrow \textrm{argmin}_{\|\bU\|_*\leq \theta} \langle \sum_{i \in S} \nabla f_i (\bX_{t_w}), \bU \rangle$ 
    \STATE
    send $\{\bu_w, \bv_w, t_w\}$ to the Master node.
\ENDWHILE
\end{algorithmic}
\label{alg:SFW-asyn}
\end{algorithm}

\begin{figure}[ht]
\begin{center}
\includegraphics[width=0.60\linewidth]{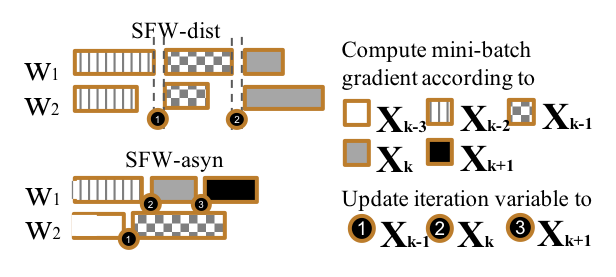}
\end{center}
\caption{{\em Illustration of simplified asynchronous computation scheme.} 'W' denotes workers, and 'X' denotes the iteration variable. All the subscript of 'X' refer to the iteration number at the master node. In SFW-asyn, the update '1' updates the model to $X_{k-1}$ with gradient computed according to $X_{k-3}$, and hence the delay is $1$. The delay is $1$ and $0$ for the update '2' and '3' respectively.}
\label{fig:asyn-sfw-illu}
\end{figure}

A natural way to deploy Stochastic Frank-Wolfe (SFW) on a master-slave distributed paradigm is explained as follows in four steps. For each iteration, we first have each worker $w = 1 ... W$ to compute $1/W$ portion of the mini-batch gradients. The master node next collects the mini-batch gradients from all the workers. It then computes the update for $\bX$, as in Eqn.\eqref{eqn:SFW_linearopt}, Eqn.\eqref{eqn:SFW_update}. Finally the master node broadcasts the updated $\bX$ to all the workers. 
We call this approach \textbf{SFW-dist} and describe it in Algorithm \ref{alg:SFW-dist}.
SFW-dist and its full batch variant serve as competitive baselines for previous state-of-the-arts \citep{zheng2018distributed}. 

However, SFW-dist has two obvious drawbacks: 
\textbf{high communication cost} and  \textbf{synchronization slow down.} 
Specifically, the master has to collect a gradient estimation from each worker for each iteration, while the size of the gradient is $\mathcal{O}(D_1 \cdot D2)$. 
When $D_1$ and $D_2$ are large, this communication cost is unaffordable \citep{zheng2018distributed}. 
On the other hand, the master has to wait to receive the gradient matrix from all workers in order to compute each update, and therefore the  time cost per iteration is dominated by the slowest worker.


We thus propose Asynchronous Stochastic Frank-Wolfe method (SFW-asyn), which simultaneously resolves the above two challenges.

For the ease of understanding, we gradually build up to our methodology and first introduce a naive version of asynchronous SFW in Algorithm \ref{alg:SFW-asyn-naive}. It only bares the benefit of asynchronous nature but no efficient communication yet. Algorithm \ref{alg:SFW-asyn-naive} is for illustration and analysis only, and not for implementation. Then we will explain how to reduce the iteration complexity to $\Ocal(D_1 + D_2)$ by making use of the low rank update nature of SFW. The whole algorithm, SFW-asyn, is described in Algorithm \ref{alg:SFW-asyn}.

As described in Algorithm \ref{alg:SFW-asyn-naive}, each worker computes a mini-batch gradient according to the latest $\bX$ it has access to, and sends the updates to the master. We denote by $t_w$ as the iteration count of $\bX$ that the worker $w$ is using to compute the updates. When the update from the worker $w$ reaches the master node, the $\bX$ in the master node reaches $t_m$ already, due to updates from the other workers. If the delay, defined as $t_m - t_w$, exceeds the maximum delay tolerance $\tau$, the master node will abondon this update. Otherwise, the master node updates the $\bX$ accordingly and broadcasts the new $\bX$ to all the workers. Algorithm \ref{alg:SFW-asyn-naive} does not have to wait for one particular slow worker in order to proceed, and therefore is resilient to the staleness of workers.

While Algorithm \ref{alg:SFW-asyn-naive} addresses the synchronization problem, the communication cost is still $\Ocal(D_1 D_2)$. The intuition of reducing the communication cost is that, all the updates matrix $\bU$ are rank one matrices, and can be perfectly represented by the outer products of two vectors. A natural idea is thus to transfer and store the vectors for potential updates, instead of the whole gradient matrices.  We ask the workers to transfer, instead of $\bU$, two vectors $\bu, \bv$ with $\bu \bv^T = \bU$, to the master. The master transfers back $\{\bu_{t_w}, \bv_{t_w}\}, ..., \{\bu_{t_m+1}, \bv_{t_m+1}\}$ , instead of $\bX_{t_m}$, to the worker $w$, and let the worker $w$ to update its own copy of $\bX_{t_w}$ to $\bX_{t_m}$ by recursively computing
\begin{equation}
\label{eqn:update_to_k}
    \bX_k = (1-\eta_k)\bX_{k-1} + \eta_k \bu_k \bv_k^T.
\end{equation}
From this practice, we reduce the communication cost to $\Ocal(D_1 + D_2)$.

Although we introduce a few extra operations that require $(D_1 D_2)$ computation as in the lines 3 of Algorithm \ref{alg:SFW-asyn}, we lifted the need to aggregate gradients from workers as in SFW-dist, which also requires $\Ocal(D_1 D_2)$ computation. Since the linear optimization (in this case, 1-SVD) requires $\Ocal(D_1 D_2)$ computation anyways, these operations do no change the asymptotic computation time.

Following the similar spirit, we propose the asynchronous Stochastic Variance Reduced Frank-Wolfe method (SVRF-asyn). We defer the full version (both asynchronous and communication efficient) to the Appendix in Algorithm \ref{alg:SVRF-asyn-naive} (not communication efficient, and only for analysis instead of implementation) and Algorithm \ref{alg:SVRF-asyn}.

\textbf{Communication Cost of SFW-asyn} For each iteration, one worker is communicating with the master. The worker sends a $\{ \bu_{t_w}, \bv_{t_w}, t_w\}$ to the master, and the master sends back an update sequence $\{\bu_{t_m}, \bv_{t_m}\}, ..., \{\bu_{t_w+1}, \bv_{t_w+1}\},$ to the worker. 
Consider the following amortized analysis. From iteration $1$ to iteration $T$, a worker $w$ received at most $T$ pairs of $\{\bu, \bv\}$ from the parameter server so that to keep its local copy of $\bX$ up-to-date. For all the $W$ workers, the total amount of message they send to the parameter server is $T$ pairs of $\{\bu, \bv\}$. Therefore, from iteration $1$ to iteration $T$, for all the $W$ workers, messages sent from the parameter server to the workers are at most with the size of $(TW(D_1 + D_2))$, and the messages sent from  workers to the parameter server are at most with the size of $(T(D_1 + D_2))$. For each iteration, on average, the communication cost is $\Ocal(W(D_1 + D_2))$. Since in the master-slave computational model, each worker has a direct connection channel with the parameter server, the communication cost along each channel is $\Ocal(D_1 + D_2)$ per iteration.


\section{Theoretical Analysis}
\label{sec:analysis}

The analysis shown in this Section are developed under Assumption \ref{assumption:independent1} and \ref{assumption:independent2}. We first present the convergence results for SFW-asyn, and the discuss the convergence behavior in a more practical scenario where the batch size is fixed (or capped).

\begin{theorem}
\textbf{(The $\mathcal{O}(1/k)$ convergence rate of SFW-asyn)}. Consider problem \ref{eqn:main-problem}, where $F(\bX)$ is convex and $L$-smooth over the nuclear-norm ball. Denote $h_k = F(\bX_k) - F^*$, where $\bX_k$ is the output of the $k^{th}$ iteration of SFW-asyn. If for all $i < k$, $m_i = \frac{G^2 (i+1)^2}{\tau^2 L^2 D^2}$, $\eta_i = \frac{2}{i+1}$, $\tau < T/2$,
\begin{equation}
\expect \left[ h_k\right] \leq \frac{(3\tau + 1)\cdot 4 L D^2}{k+2}    
\end{equation}
\label{thm:conv-SFW-asyn}
\end{theorem}

We defer the proof to the appendix. The $\mathcal{O}(1/k)$ convergence rate of SFW-asyn matches the convergence rate of the original SFW \citep{hazan2016variance}.
While there is a $\Ocal(\tau)$ slowdown in the convergence rate comparatively, we only require a $\Ocal(i^2/\tau^2)$ batch size for each iteration $i$, instead of  $\Ocal(i^2)$ as in the original SFW.

The requirement of increasing batch size for SFW is that, there is no self-tuning gradient variance, and decreasing step size is not enough to control the error introduced by stochastic gradient. 

The intuition of why SFW-asyn requires a small batch size for each iteration is that, while the error of asynchronous update already dominates the error for each iteration (by a factor of $\tau$), it will not change the order-wise behavior by having the gradient variance to be in the same scale as the error introduced by asynchronous update.

Our analysis, detailed in the appendix, is based on perturbed iterate analysis. This analysis is generalizable: with minor modification, we can show that SVRF-asyn converges, as in the below theorem.

\begin{theorem}
\textbf{(The convergence rate of SVRF-asyn)}. Consider problem \ref{eqn:main-problem}, where $F(\bX)$ is convex and $L$-smooth over the nuclear-norm ball. Denote $h_k = F(\bW_k) - F^*$, where $\bW_k$ is the output of the SVRF-asyn at its $k$ outer iteration. If $m_k = \frac{96 (k+1)}{\tau}$, $\eta_k = \frac{2}{k+1}$, $N_t = 2^{t+3} - 2$, the maximum delay is $\tau$, then
\begin{equation}
\expect \left[ h_k\right] \leq \frac{(3\tau + 12)\cdot 4 L D^2}{2^{k+1}}    
\end{equation}
\label{thm:SVRF-asyn}
\end{theorem}
We defer the proof to the appendix. This convergence rate is the same as in original SVRF \citep{hazan2016variance}.


\subsection{Constant Batch Size}

As discuss in the previous sub-section, the necessity of increasing batch size is not brought up by our asynchronous modification, but in the original Stochastic Frank-Wolfe method \citep{hazan2016variance}. 
However as the size of the mini-batch increases, it may get close to the size of the entire dataset, which often violates the interests of practical implementation. 
Below we briefly discuss two side-steps: 
(1) When the problem has the \textit{shrinking gradient variance}, the requirement of increasing batch size can be lifted 
(2) when only an approximately good result is needed (which is often the case when FW method is used), a fixed batch size will lead to a convergence to the optimal neighbourhood, with the same iteration convergence rate, for both the original SFW and our proposed SFW-asyn.

We show first that as long as the variance of the stochastic gradient is diminishing as the algorithm proceeds, the requirement of increasing batch size can be lifted.

\begin{definition}
\textbf{(shrinking gradient variance.)} A function $F(\bX) = \frac{1}{n} f_i(\bX)$ has shrinking gradient variance if the following holds:
\begin{equation}
    \expect \left[ \| \nabla f_i(\bX_k) - \nabla F(\bX_k)\|_F^2 \right] \leq \frac{G^2}{(k+2)^2}
\end{equation}
 as $\expect \left[ \| \bX_k - \bX^*\|_F \right]\leq \frac{c}{k+2}$ for some constant $c$ and $G$, and for an optimal solution $\bX^*$.
\end{definition}

For example, it is easy to verify that the noiseless matrix completion problem and the noiseless matrix sensing problem have the property of shrinking gradient variance.

It is easy to check that, under the \textit{shrinking gradient variance} condition, one can change the batch size requirement from $m_i = \frac{G^2 (i+1)^2}{ L^2 D^2}$ to $m = \frac{G^2 (c)^2}{ L^2 D^2}$ for SFW and from $m_i = \frac{G^2 (i+1)^2}{\tau^2 L^2 D^2}$ to $m = \frac{G^2 (c)^2}{\tau^2 L^2 D^2}$ for SFW-asyn, and maintain the same convergence rate as in Theorem 3 in \citep{hazan2016variance} and Theorem \ref{thm:conv-SFW-asyn} in the previous subsection.


While the shrinking gradient variance property is not presented, having a constant batch size will have SFW and SFW-asyn converges to a local neighbourhood of the optimal value:
\begin{theorem}
\textbf{(SFW converges to a neighbour of the optimal with constant batch size)}. Consider problem \ref{eqn:main-problem}, where $F(\bX)$ is convex and $L$-smooth over the nuclear-norm ball. Denote $h_k = F(\bX_k) - F^*$, where $\bX_k$ is the output of the $k^{th}$ iteration of SFW. If for all $i < k$, $m_i = \frac{G^2 c^2}{L^2 D^2}$ for a constant $c$, $\eta_i = \frac{2}{i+1}$, then
\begin{equation}
\expect \left[ h_k\right] \leq \frac{4 L D^2}{k+2} + \frac{1}{c} LD^2    
\end{equation}
\label{thm:conv-SFW-constant-batch}
\end{theorem}

The first term, $\frac{4 L D^2}{k+2}$ is inline with the convergence rate of the original SFW \citep{hazan2016variance}. 
Having a fixed batch size will incur a residual error $\frac{1}{c} LD^2 $ controlled by the batch size $c$. Similar convergence result holds for SFW-asyn:

\begin{theorem}
\textbf{(SFW-asyn converges to a neighbour of the optimal with constant batch size)}. Consider problem \ref{eqn:main-problem}, where $F(\bX)$ is convex and $L$-smooth over the nuclear-norm ball. Denote $h_k = F(\bX_k) - F^*$, where $\bX_k$ is the output of the $k^{th}$ iteration of SFW-asyn. If for all $i < k$, $m_i = \frac{G^2 c^2}{\tau^2 L^2 D^2}$ for a constant $c$, $\eta_i = \frac{2}{i+1}$, $\tau < T/2$,
\begin{equation}
\expect \left[ h_k\right] \leq \frac{(4\tau + 1)\cdot 2 L D^2}{k+2} + \frac{\tau}{c} LD^2
\label{eqn:conv-SFW-asyn-constant-batch}
\end{equation}
\label{thm:conv-SFW-asyn-constant-batch}
\end{theorem}
Note that the batch size in Theorem \ref{thm:conv-SFW-asyn-constant-batch} is $\tau^2$ times smaller than the batch size in Theorem \ref{thm:conv-SFW-constant-batch}.
The proof of the above two theorems can be found in the appendix.

\begin{corollary}
\label{coro:total_complexity}
\textbf{(Complexity to reach $\epsilon$ accuracy with fixed batch size)} For Algorithm \ref{alg:SFW-asyn} to achieve $F(\bX) - F^* \leq \epsilon$, we need 
$\Ocal \left(
\frac{\tau}{\epsilon - \tau/c}
\right)$
iterations in the master node.
It means, among all the machines, in total, we need
$\Ocal \left(
\frac{c^2}{\tau \epsilon - \tau^2/c}
\right)$
stochastic gradient evaluations, 
and 
$\Ocal \left(
\frac{\tau}{\epsilon - \tau/c}
\right)$
times linear optimization ($1$-SVD).
\end{corollary}
We defer the proof to the Appendix.

\begin{table}[h]
\begin{center}
\begin{tabular}{|c|c|c|}
\hline
& \textbf{\# Sto. Grad.} & \textbf{\# Lin. Opt.} \\ \hline
\textbf{SFW-asyn} 
& $\Ocal \left(\frac{c^2}{\tau \epsilon - \tau^2/c}\right)$
& $\Ocal \left(\frac{\tau}{\epsilon - \tau/c}\right)$
\\ \hline
\textbf{SFW}     
& $\Ocal \left(\frac{c^2}{\epsilon - 1/c}\right)$
& $\Ocal \left(\frac{1}{\epsilon - 1/c}\right)$
\\ \hline
\end{tabular}
\end{center}
\caption{Complexity comparison between SFW-asyn and SFW \citep{hazan2016variance} with fixed batch size (defined by the constant $c$, see Theorem \ref{thm:conv-SFW-constant-batch} and \ref{thm:conv-SFW-asyn-constant-batch}). \textbf{\# Linear Opt.} is the abbreviation of \textbf{the number of linear optimization} and \textbf{\# Sto. Grad.} is the abbreviation of \textbf{the number of stochastic gradient evaluation}.}
\label{tab:operations}
\end{table}

How good are these complexity results?
To interpret Table \ref{tab:operations} in a simple way, one could consider the case using a very large batch size $c$, and therefore SFW-asyn roughly reduces the stochastic gradient evaluations to $\frac{1}{\tau}$ portion of SFW, and requires $\tau$ times of linear optimizations. This is a good trade-off between the two processes considering we tackle the problems with very large-scale data-set where the stochastic gradient evaluation will dominate the computation for each iteration. In this sense, by simply viewing the number of operations required in each model update, SFW-asyn is already on par with or better than vanilla SFW, not including the speed-ups due to distributed computations with multiple workers. 


\section{Empirical Results}
\label{sec:simulation}


\begin{figure*}[!thb]
\begin{center}
\includegraphics[width=0.24\linewidth,height=0.18\linewidth]{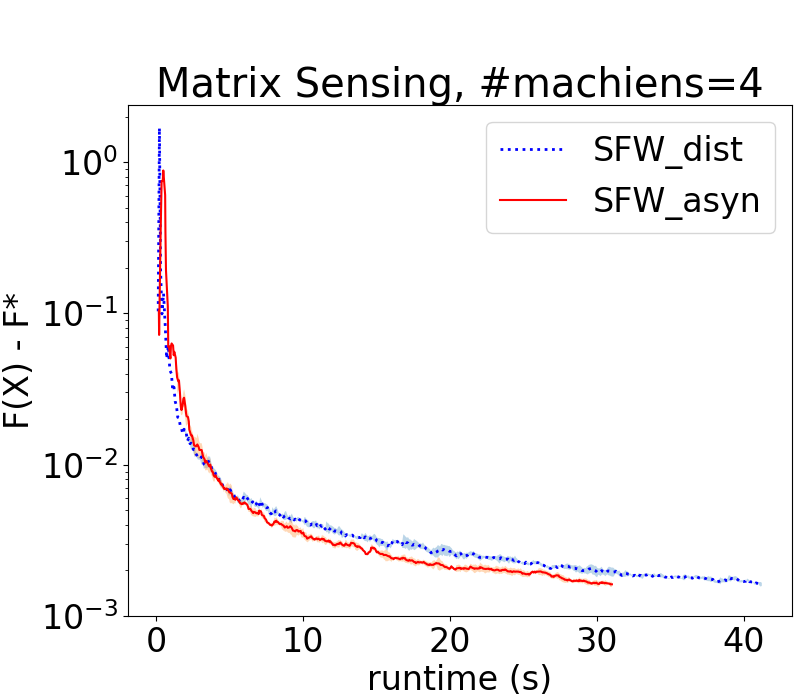}
\includegraphics[width=0.24\linewidth,height=0.18\linewidth]{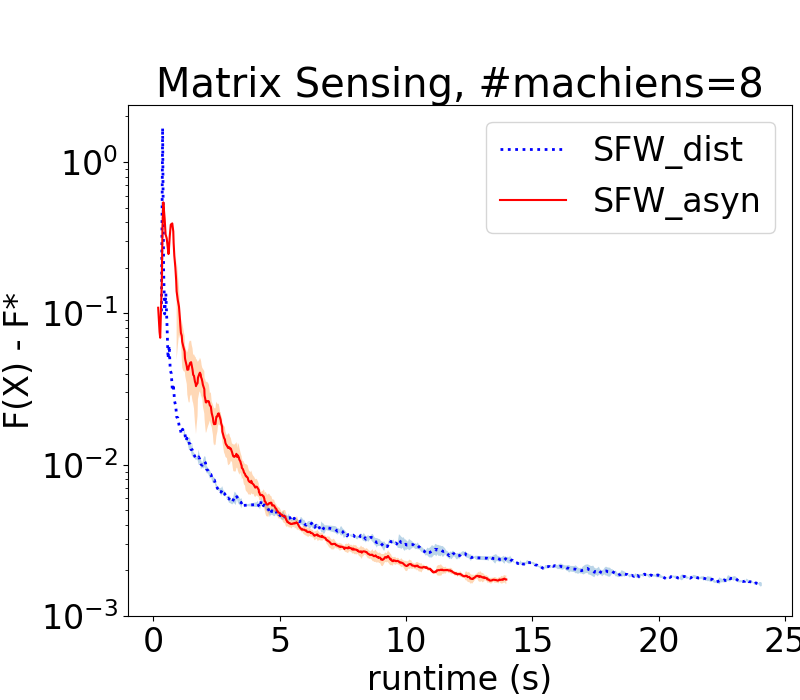}
\includegraphics[width=0.24\linewidth,height=0.18\linewidth]{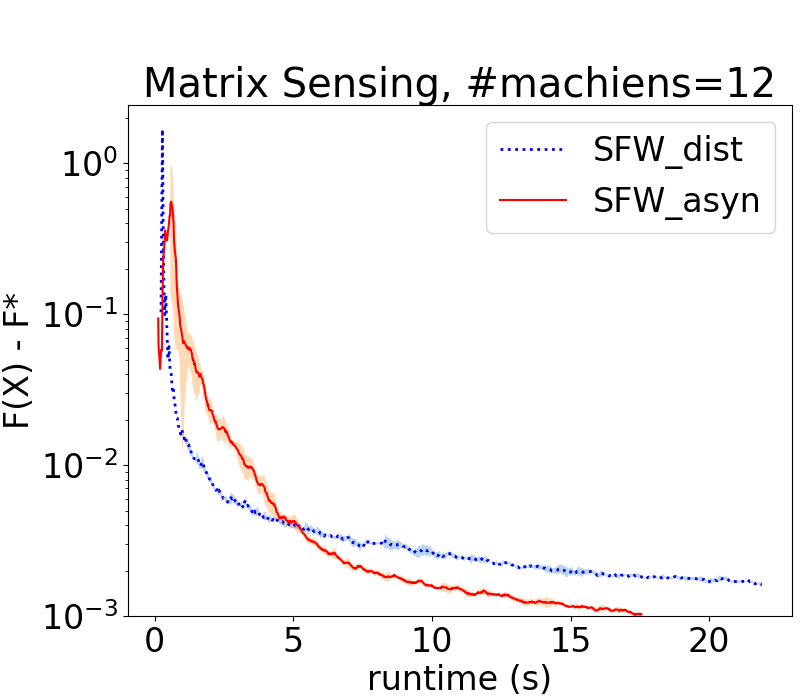}
\includegraphics[width=0.24\linewidth,height=0.18\linewidth]{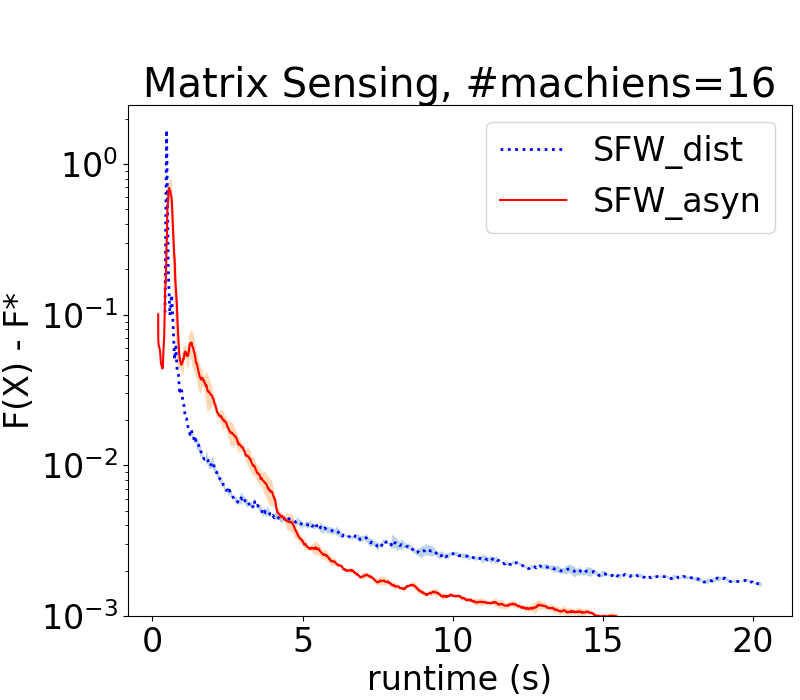}
\includegraphics[width=0.24\linewidth,height=0.18\linewidth]{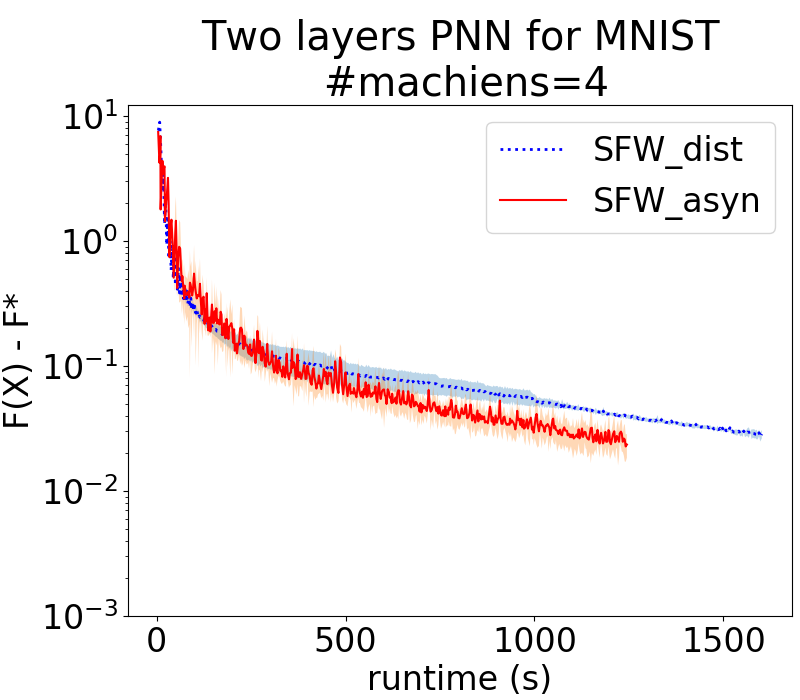}
\includegraphics[width=0.24\linewidth,height=0.18\linewidth]{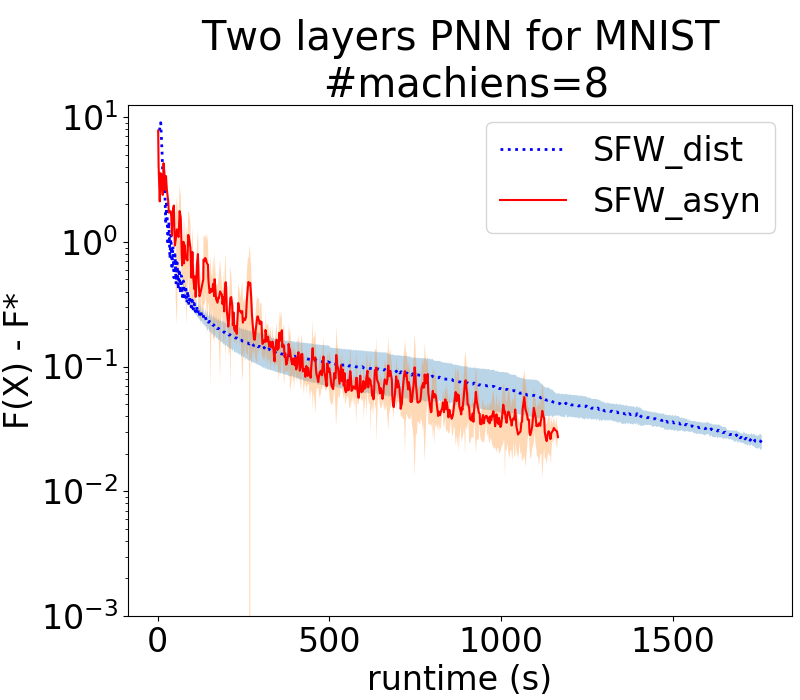}
\includegraphics[width=0.24\linewidth,height=0.18\linewidth]{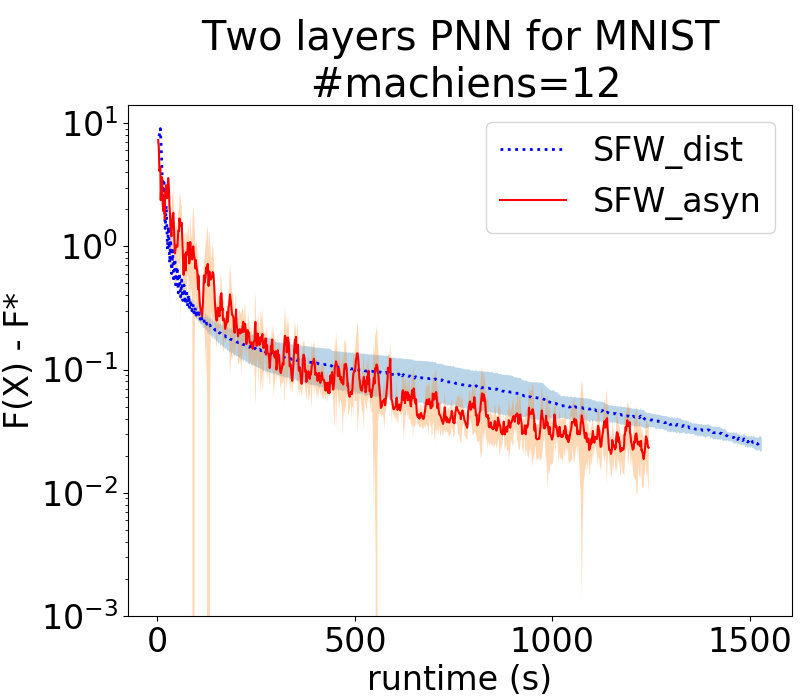}
\includegraphics[width=0.24\linewidth,height=0.18\linewidth]{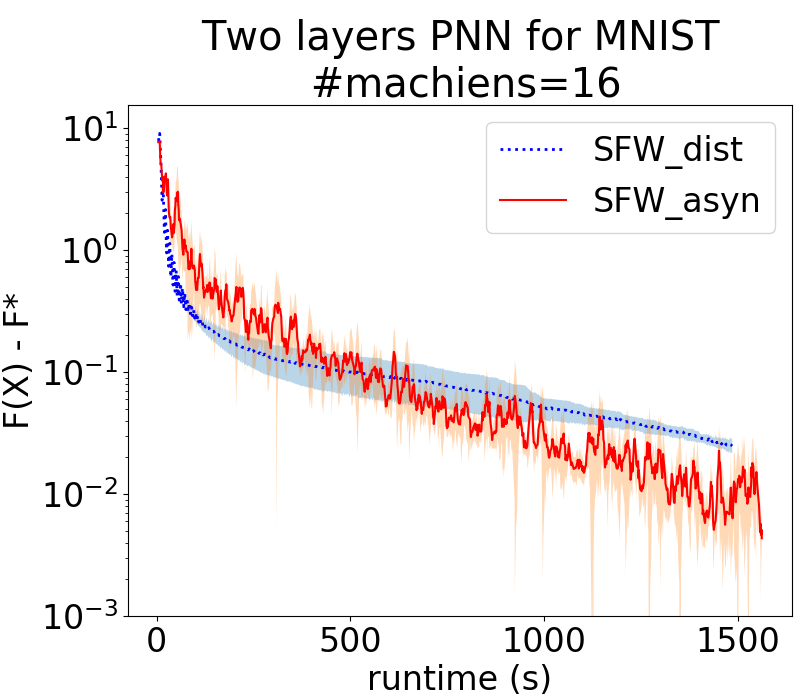}
\end{center}
\caption{Convergence of the relative loss vs runtime on AWS EC2 cluster. The first row shows the results for Matrix Sensing problem on synthesized data, and the second row shows that results for PNN on MNIST classification. The range of one standard deviation is shown as a shadow overlay. }
\label{fig:conv} 
\end{figure*}

In this section we empirically show the convergence performance of SFW-asyn, as supported by our theorems. We also show the speedup of SFW-asyn over SFW-dist on AWS EC2.

\subsection{Setup}

First let's review two machine learning applications:\\ 
\noindent\textbf{Matrix sensing}. Matrix sensing problem is to estimate a low rank matrix $\bX^*$, with observations of sensing matrices $\bA_i$ and sensing response $y_i = \langle \bA_i, \bX^* \rangle$, for $i = 1...N$. It is an important problem that appears in various fields such as quantun computing, image processing, system design, and so on (see \citep{park2016non} and the reference there in). Its connection to neural network is also an active research topic \citep{li2017algorithmic}.\\ \noindent\textbf{Polynomial Neural Network (PNN)}. PNN is the neural network with polynomial activation function. PNN has been shown to have universal representation power just as neural networks with sigmoid and ReLU activation function \citep{livni2014computational}.

We test the performance of SFW-asyn on minimizing the empirical risk of the matrix sensing problem with synthesized data: (1) generate the ground truth matrix $\bX^* = \bU \bV^T / \|\bU \bV^T\|_*$ where $\bU, \bV \in \mathbb{R}^{30\times 3}$ are generated by uniformly sampling from $0$ to $1$ for each entry; (2) generate $N=90000$ sensing matrices $\bA_i$ by sampling each entry from a standard normal distribution (3) compute response $y_i = <\bA_i, \bX^*> + \epsilon$ where $\epsilon$ is sampled from a Gaussian distribution with mean $0$ and standard deviation $0.1$. The empirical risk minimization for this task is
\[
\min_{\|\bX\|_* \leq 1} F(\bX) = \frac{1}{N} \sum_i^N (\langle \bA_i, \bX \rangle - y_i)^2.
\]

We also test SFW-asyn on minimizing the training loss of a Two layers PNN with quadratic activation function and smooth hinge loss to classify MNIST dataset of handwritten digits. In MNIST we have $60000$ training data $(\ba_i, y_i)$, where $\ba_i$ are vectorized $28*28$ pixels images, and $y_i$ are the labels. We set $y_i = -1$ if the label is $0, 1, 2, 3, 4$ or $y_i = 1$ otherwise. We divide the data $\ba_i$ by $255$ and therefore they are within zero and one. Hence the training objective of this neural network is
\[
\min_{\|\bX\|_* \leq \theta} F(\bX) = \frac{1}{N} \sum_i^N \textrm{s-hinge}(y_i, \ba_i^T \bX \ba_i),
\]
where the smooth hinge loss function $\textrm{s-hinge}(y, t)$ equals $0.5 - ty$ if $ty \leq 0$,  equals $ (0.5 \cdot (1-ty))^2$ if $0 \leq ty \leq 1$, and equals $0$ otherwise. While one can tune $\theta$ for good classification performance, we only show the results for $\theta=1$ as we are only interested in minimizing the objective value.

The hyper-parameters are mostly chosen as indicated by our analysis in Section \ref{sec:analysis}. We set the maximum batch-size to be $10000$ for the matrix sensing, and $3000$ for the PNN, such that the gradient computation time dominates the 1-SVD computation.

The distributed computation environment is set-up on AWS EC2. We leave the details in the Appendix.

\subsection{Result analysis}

We show the convergence results against the wall clock time on AWS EC2 in Figure \ref{fig:conv}, and show the speedup against single worker in Figure \ref{fig:speedup}.

Both SFW-dist and SFW-asyn obtain a better speedup result for the Matrix Sensing problem than PNN. SFW-dist has up to five times speedup for the Matrix Sensing problem, but only has marginal speedup on PNN problem, because the variable matrix size is much larger in the PNN problem than the Matrix Sensing problem. The variable matrix size in the Matrix Sensing problem is $30 \times 30 = 900$, while in PNN it is $784 \times 784 \approx 640 k$. And hence the speedup is quickly counteracted by the communication overhead. Although the communication cost of SFW-asyn is not as sensitive as SFW-dist to the increase of the matrix size, a larger variable matrix does increase the computation cost of the linear optimization. As in our Corollary \ref{coro:total_complexity}, SFW-asyn might comparatively perform more linear optimization, and hence SFW-asyn compromises in terms of the overall speedup ratio.

Nevertheless, the performance of SFW-asyn consistently outperforms SFW-dist.

\begin{figure}[t]
\begin{center}
\includegraphics[width=0.30\linewidth]{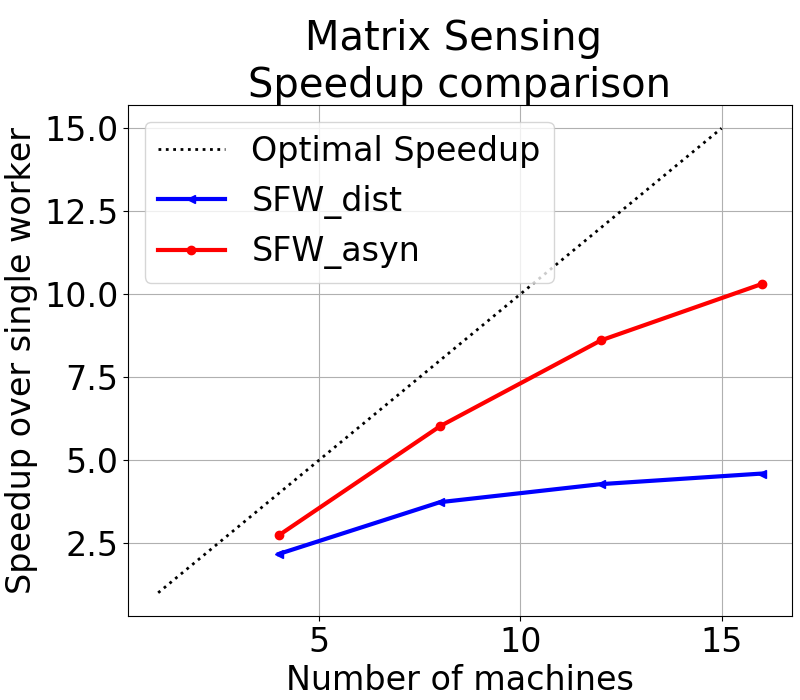}
\includegraphics[width=0.30\linewidth]{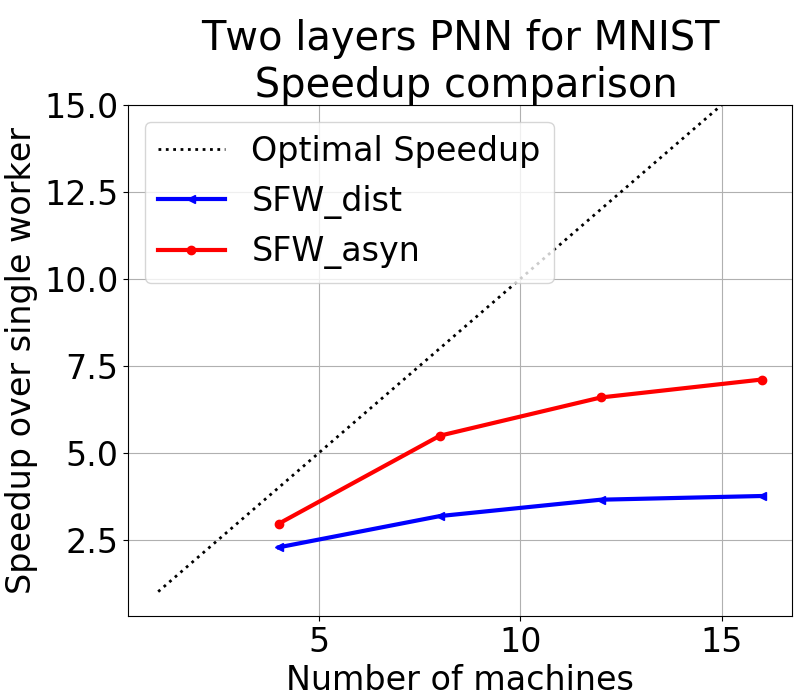}
\end{center}
\caption{Comparing the time needed to achieve the same relative error ($0.02$ for PNN and $0.001$ for Matrix Sensing) against single worker.}
\label{fig:speedup} 
\end{figure}



\section{Conclusion}

In this work we propose the Asynchronous Stochastic Frank-Wolfe (SFW-asyn) method, which simultaneously addresses the synchronization slow-down problem, and reduces the communication cost from $\mathcal{O} (D_1 D_2)$ to  $\mathcal{O} (D_1 + D_2)$. We establish the convergence guarantee for SFW-asyn that matches vanilla SFW, and experimentally validate the significant speed-ups of SFW-asyn over the baseline approaches on Amazon EC2 instances.

\subsection*{Acknowledgement}

This research has been supported by NSF Grants 1618689, DMS 1723052, CCF 1763702,
AF 1901292 and research gifts by Google, Western Digital and NVIDIA.

\bibliographystyle{apalike}

\begingroup
\setstretch{0.97}
\bibliography{arxiv}
\endgroup

\clearpage
\appendix

\onecolumn

\section{Perturb Iterate Analysis of Frank Wolfe Type Methods}

While our algorithms and the analysis can be applied to general $\theta$, we focus on the case when $\theta = 1$ for notation simplicity.
Note there could be multiple optimal $\bX$, and hence we use $\bX^*$ to denote one of the optimal $\bX$ out of the whole optimal set, unless otherwise specified.
Let's start with a general definition of Frank-Wolfe Method framework.
The update iteration of Frank-Wolfe Method framework is
\begin{equation}
\begin{split}
    \bU_k &= \argmin_{\bU} \quad \langle  \gtil_{k-1}, \bU \rangle, \\
    \bX_k &= (1 - \eta_k) \bX_{k-1} + \eta_k \bU_k,
\end{split}
\label{eqn:general_fw}
\end{equation}
where $\gtil_{k-1}$ is the estimate of the gradient at point $\bX_{k-1}$, and $\eta_k$ is the step size at iterate $k$. For example, for SFW, $\gtil_{k-1}$ is $\frac{1}{|S|} \sum_{i \in S}\nabla f_i(\bX_{k-1})$, for SVRF $\gtil_{k-1}$ is the variance reduced stochastic gradient \citep{hazan2016variance}, and so on.

Note that the function $F$ is convex and $L$-smooth.
\begin{align*}
    F(\bX_k) &\leq 
    F(\bX_{k-1}) + \langle \nabla F (\bX_{k-1}), \bX_k - \bX_{k-1} \rangle + \frac{L}{2} \| \bX_k - \bX_{k-1}\|^2  \shortrnote{(by smoothness) }\\
    &\leq F(\bX_{k-1}) + \eta_k \langle \nabla F(\bX_{k-1}), \bU_k - \bX_{k-1} \rangle + \frac{L\eta_k^2}{2} \| \bU_k - \bX_{k-1}\|^2 \shortrnote{(by Eq. \eqref{eqn:general_fw})} \\
    &\leq F(\bX_{k-1}) + \eta_k \langle \nabla F(\bX_{k-1}), \bU_k - \bX_{k-1} \rangle + \frac{L\eta_k^2 D^2}{2} \shortrnote{(by Definition of D)} \\
\end{align*}
Let's look at the second term, which is the focus of our analysis.
\begin{align*}
    &\langle \nabla F(\bX_{k-1}), \bU_k - \bX_{k-1} \rangle\\
    =&  
    \langle \nabla F(\bX_{k-1}) - \gtil_{k-1}, \bU_k - \bX_{k-1} \rangle
    + \langle \gtil_{k-1} , \bU_k - \bX_{k-1} \rangle \\
    \leq& 
    \langle \nabla F(\bX_{k-1}) - \gtil_{k-1}, \bU_k - \bX_{k-1} \rangle 
    + \langle \gtil_{k-1} , \bX^* - \bX_{k-1} \rangle  \shortrnote{(by Eq. \eqref{eqn:general_fw})} \\
    =& 
    \langle \nabla F(\bX_{k-1}) - \gtil_{k-1}, \bU_k - \bX^* \rangle 
    + \langle \nabla F(\bX_{k-1}) , \bX^* - \bX_{k-1} \rangle  \\
    \leq & 
    F(\bX^*) - F(\bX_{k-1})  + \langle \nabla F(\bX_{k-1}) - \gtil_{k-1}, \bU_k - \bX^* \rangle . \shortrnote{(by the convexity of $F$)}
\end{align*}
Let $h_k = F(\bX_k) - F(\bX^*)$ and by putting everything together we get,
\begin{equation}
    h_k \leq (1-\eta_k) h_{k-1} + \eta_k \left[ \langle \nabla F(\bX_{k-1}) - \gtil_{k-1}, \bU_k - \bX^* \rangle  + \frac{L\eta_k D^2}{2} \right].
\label{eqn:pia}
\end{equation}

For the last term in the bracket, we call it residual. 
Note that $\frac{L\eta_k D^2}{2}$ can be controlled by diminishing step size $\eta_k$. 
The only remaining term to bound is $\langle \nabla F(\bX_{k-1}) - \gtil_{k-1}, \bU_k - \bX^* \rangle$, \textbf{which captures the inexactness of the gradient estimation}. 
If the gradient is exact, that is, $\langle \nabla F(\bX_{k-1}) - \gtil_{k-1}, \bU_k - \bX^* \rangle = 0$, the proof is then simple. 
One can plug in $\eta_k = 2 / (k+1)$ and show by induction that $h(k) \leq 4LD^2 / (k+2)$.
Indeed for our analysis we would like to have this term decrease with the rate of $\mathcal{O}(1/k)$, in order to retain the $\mathcal{O}(1/k)$ convergence rate.

\subsection{Convergence of SFW-asyn}

For Asyn-SFW, consider the worst case when a worker sends an update $\bu \bv^T$ based on $\bX_{\tau}$. That is,
\begin{equation}
\gtil_{k-1} =  \frac{1}{m_{k-\tau}}
    \sum_{i \in S_a}  \nabla f_i (\bX_{k-\tau}),
\label{eqn:gradient_estimate_sfw_asyn}
\end{equation}
for some sample set $S_a$.
We first setup an upper bound for $\langle \nabla F(\bX_{k-1}) - \gtil_{k-1}, \bU_k - \bX^* \rangle$. Then we combine it with Eq. \eqref{eqn:pia} to complete the proof for Theorem \ref{thm:conv-SFW-asyn}.

\begin{lemma}
Under the same assumptions of Theorem \ref{thm:conv-SFW-asyn}, 
let $\gtil_{k-1}$ be the gradient estimate defined in Eq. \eqref{eqn:gradient_estimate_sfw_asyn}, and
$\bU_k$ is defined in Eq. \eqref{eqn:general_fw}, 
then we can bound the inexactness of the gradient estimation for SFW-asyn:
\[
\expect \left [\langle \nabla F(\bX_{k-1}) - \gtil_{k-1}, \bU_k - \bX^* \rangle \right] \leq
\frac{GD}{\sqrt{m_{k-\tau}}} + L \tau \eta_{k-\tau} D^2
\]
\label{lemma:perturb-asyn-SFW}
\end{lemma}
\begin{proof}
\begin{align*}
& \langle \nabla F(\bX_{k-1}) - \gtil_{k-1}, \bU_k - \bX^* \rangle\\
\leq & \|  \nabla F(\bX_{k-1}) - \gtil_{k-1}\|_F \|\bU_k - \bX^*\|_F \\
\leq & D \| \nabla F (\bX_{k-1}) - \gtil_{k-1}\|_F  \\
= & \left\lVert \frac{1}{m_{k-\tau}} \sum_{i \in S_a}  \nabla f_i (\bX_{k-\tau}) 
- \nabla F (\bX_{k-\tau}) + \nabla F(\bX_{k-\tau}) - \nabla F (\bX_{k-1}) \right\rVert_F \cdot D \\
\leq & \left\lVert \frac{1}{m_{k-\tau}} \sum_{i \in S_a}  \nabla f_i (\bX_{k-\tau}) 
- \nabla F (\bX_{k-\tau}) \right\rVert_F \cdot D
+ \left\lVert \nabla F(\bX_{k-\tau}) - \nabla F (\bX_{k-1}) \right\rVert_F  \cdot D\\
\end{align*}
Hence the gradient inexactness is decomposed into two parts: the inexactness resulted from stochastic batch gradient (the first term), and the staleness (the second term). We increase the batch size and hence obtain a decrease rate of $1/k$ on the first part.
\begin{align*}
    \expect \left\lVert \frac{1}{m_{k-\tau}} \sum_{i \in S_a}  \nabla f_i (\bX_{k-\tau}) 
- \nabla F (\bX_{k-\tau}) \right\rVert^2_F \leq \frac{G^2}{m_{k-\tau}},
\end{align*}
and therefore
\begin{align*}
    \expect \left\lVert \frac{1}{m_{k-\tau}} \sum_{i \in S_a}  \nabla f_i (\bX_{k-\tau}) 
- \nabla F (\bX_{k-\tau}) \right\rVert_F \leq \frac{G}{\sqrt{m_{k-\tau}}},
\end{align*}
by Jensen's inequality.
For the stochastic inexactness, we have to control it by increasing batch size. Note that for some applications when the variance of the stochastic gradient shrinks as the algorithm approach the optimal point, the requirement of increasing batch size can be waived.

For the staleness:
\begin{align*}
    &\| \nabla F (\bX_{k-1}) - \nabla F (\bX_{k-\tau}) \|_F \\
    \leq& L \| \bX_{k-1} - \bX_{k-\tau}\|_F \shortrnote{(by smoothness)}\\
    =& L \| \bX_{k-1} - \bX_{k-2} + \bX_{k-2} + ... - \bX_{k-\tau}\|_F \\
    \leq& L \sum_{a=1}^{\tau+1} \| \bX_{k-a} - \bX_{k-a-1}\|_F \shortrnote{(by triangular inequality)}\\
    \leq& L \sum_{a=1}^{\tau+1} \eta_{k-a} \| \bU_{k-a} - \bX_{k-a-1}\|_F \shortrnote{(by Eq. \eqref{eqn:general_fw})}\\
    \leq& L \tau \eta_{k-\tau} D. 
\end{align*}

Combining the above two terms we finish the proof.
\end{proof}

Now back to the proof of Theorem \ref{thm:conv-SFW-asyn}.
\begin{proof}[Proof of Thm \ref{thm:conv-SFW-asyn}]
We finish the proof by induction. The case of $k=1$ is obviously true:
\begin{align*}
    F(\bX_1) - F^* &\leq 
    \langle \nabla F (\bX^*), \bX_1 - \bX^* \rangle + \frac{L}{2} \| \bX_1 - \bX^*\|^2  \shortrnote{(by smoothness) }\\
    &\leq 0 + \frac{L D^2 }{2}.\\
\end{align*}

Suppose $\expect \left[ h_k \right] \leq \frac{(4\tau+3)2LD^2}{k+2}$ for $k \leq T-1$. Then
\begin{align*}
\expect \left[ h_T \right] 
\leq & \expect \left[ (1-\eta_T) h_{T-1} + \eta_T \left( \langle \nabla F(\bX_{T-1}) - \gtil_{T-1}, \bU_T - \bX^* \rangle  + \frac{L\eta_T D^2}{2} \right) \right] \\
\leq & \expect \left[ (1-\eta_T) h_{T-1}\right] 
    + \eta_T \left( \frac{GD}{\sqrt{m_{T-\tau}}} + L \tau \eta_{T-\tau} D^2  + \frac{L\eta_T D^2}{2} \right) \shortrnote{(by Lemma \ref{lemma:perturb-asyn-SFW})}  \\
= & \expect \left[ \frac{T-1}{T+1} h_{T-1} \right] + \frac{2}{T+1} \left[ \frac{\tau LD^2}{T+1-\tau} + \frac{2\tau LD^2}{T+1 -\tau}   + \frac{LD^2}{T+1} \right] \shortrnote{(plug in $m_T$ and $\eta_T$)}  \\
\leq & \frac{T-1}{T+1} \frac{(3\tau+1)4LD^2}{T+1} + \frac{2}{T+1} \left[ \frac{\tau LD^2}{T+1-\tau} + \frac{2\tau LD^2}{T+1 -\tau}   + \frac{LD^2}{T+1} \right] \shortrnote{( recursion condition)}  \\
= & \frac{4LD^2}{(T+1)^2}\left[ \left(3\tau+1\right)(T-1) + \frac{T+1 }{T+1 -\tau}\left(\tau +\frac{\tau}{2}  \right)  + \frac{1}{2} \right]  \\
\leq & \frac{4LD^2}{(T+1)^2}\left[ \left(3\tau+1\right)(T-1) + 3\tau + 1 \right]  \shortrnote{($\tau < T/2$)}\\
= & \frac{4LD^2}{(T+1)^2}\left[ \left(3\tau+1\right)T \right]\\
\leq & \frac{(3\tau+1)4LD^2}{(T+2)}
\shortrnote{(since $\frac{T}{(T+1)^2} < \frac{1}{T+2}$)}\\
\end{align*} 

\end{proof}

\subsection{Convergence of SVRF-asyn}

For SVRF-asyn, consider the worst case when a worker sends an update $\bu \bv^T$ based on $\bX_{\tau}$. That is,
\begin{equation}
    \gtil_{k-1} =  \frac{1}{m_{k-\tau}}
    \sum_{i \in S_a}  \left[ \nabla f_i (\bX_{k-\tau}) - \nabla f_i (\bW) \right] + \nabla F (\bW),
\label{eqn:gradient_estimate_svrf_asyn}
\end{equation}

for some sample set $S_a$.

Before we get started, let's get prepared with some standard lemma on variance reduced algorithms.

\begin{lemma} (\textbf{Lemma 1 restated in \citep{hazan2016variance}})
For $\bX, \bW $ such that $\|\bX\|_* =  \|\bW\|_* = 1 $
\begin{align*}
    &\expect \left[ \|\nabla f_i(\bX) - \nabla f_i(\bW) + \nabla F(\bW) - \nabla F(\bX)\|^2 \right] \\
    \leq& 6 L \left(2 \expect [F(\bX) - F(\bW^*)] + \expect[F(\bW) - F(\bW^*)] \right)
\end{align*}
\end{lemma}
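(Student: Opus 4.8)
The plan is to read the left-hand side as the \emph{variance} of the SVRG-type gradient sample $\nabla f_i(\bX) - \nabla f_i(\bW)$ about its mean, and then convert that variance into function-value gaps through the standard smoothness--convexity inequality applied at the optimum $\bW^*$.

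First I would note that $\nabla F(\bX) = \expect_i[\nabla f_i(\bX)]$ and $\nabla F(\bW) = \expect_i[\nabla f_i(\bW)]$, so the vector inside the norm is precisely $Y - \expect Y$ with $Y = \nabla f_i(\bX) - \nabla f_i(\bW)$. The identity $\expect\|Y - \expect Y\|^2 = \expect\|Y\|^2 - \|\expect Y\|^2 \le \expect\|Y\|^2$ then collapses the claim to bounding $\expect\|\nabla f_i(\bX) - \nabla f_i(\bW)\|^2$. I would insert the optimum $\bW^*$ and split with $\|a-b\|^2 \le 2\|a\|^2 + 2\|b\|^2$, reducing everything to the two ``gradient mismatch'' terms $\expect\|\nabla f_i(\bX) - \nabla f_i(\bW^*)\|^2$ and $\expect\|\nabla f_i(\bW) - \nabla f_i(\bW^*)\|^2$.

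Next, since each $f_i$ is convex and $L$-smooth, I would apply the co-coercivity bound
\[
\|\nabla f_i(\bX) - \nabla f_i(\bW^*)\|^2 \le 2L\left[f_i(\bX) - f_i(\bW^*) - \langle \nabla f_i(\bW^*), \bX - \bW^* \rangle\right],
\]
together with the identical inequality for $\bW$ in place of $\bX$, and take the expectation over $i$ so that the per-sample linear terms assemble into $\langle \nabla F(\bW^*), \cdot - \bW^*\rangle$. This is where the only genuine subtlety sits: we are in the constrained (nuclear-norm-ball) setting, so $\nabla F(\bW^*) \ne 0$ in general and the linear term does not simply disappear as it would for unconstrained SVRG. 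The resolution is the first-order optimality of $\bW^*$ over the feasible set, namely $\langle \nabla F(\bW^*), \bX - \bW^*\rangle \ge 0$ and $\langle \nabla F(\bW^*), \bW - \bW^*\rangle \ge 0$ (both $\bX$ and $\bW$ are feasible since $\|\bX\|_* = \|\bW\|_* = 1$); these enter with a favorable sign, so the linear terms can be dropped, leaving $\expect\|\nabla f_i(\bX) - \nabla f_i(\bW^*)\|^2 \le 2L\,\expect[F(\bX) - F(\bW^*)]$ and likewise for $\bW$.

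Finally I would collect constants: the two split pieces contribute at most $4L\,\expect[F(\bX)-F(\bW^*)] + 4L\,\expect[F(\bW)-F(\bW^*)]$, which is dominated by the stated bound $6L\big(2\expect[F(\bX)-F(\bW^*)] + \expect[F(\bW)-F(\bW^*)]\big)$ because every gap is nonnegative, so no tighter bookkeeping is needed. I expect the main obstacle to be conceptual rather than computational: correctly discarding the $\langle \nabla F(\bW^*), \cdot\rangle$ term through the variational inequality instead of the textbook unconstrained argument. Everything else is the routine variance-to-second-moment reduction and the smoothness estimate.
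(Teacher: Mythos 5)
The paper does not prove this lemma at all --- it is imported verbatim from \citep{hazan2016variance} --- so there is no in-paper argument to compare against; judged on its own, your proof is correct and is essentially the standard argument behind the cited result: variance bounded by second moment, a two-way split of $\nabla f_i(\bX)-\nabla f_i(\bW)$ through the optimum, co-coercivity of each convex $L$-smooth $f_i$, and the variational inequality $\langle \nabla F(\bW^*), \cdot - \bW^*\rangle \ge 0$ to discard the linear terms in the constrained setting (you are right that this last step is the one place where the unconstrained textbook argument would be wrong). Your bookkeeping in fact yields the sharper constant $4L\,\expect[F(\bX)-F(\bW^*)] + 4L\,\expect[F(\bW)-F(\bW^*)]$, which dominates into the stated $6L(2(\cdot)+(\cdot))$ bound as you note. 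The only dependency worth flagging explicitly is that co-coercivity is applied to each individual $f_i$, so the argument needs every $f_i$ (not merely $F$) to be convex and $L$-smooth, and it needs $\bX$ and $\bW$ to be feasible so that the optimality condition at $\bW^*$ applies --- both are standing assumptions in this setting, but neither is restated in the lemma.
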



\begin{lemma}
Under the same assumptions as in Theorem \ref{thm:SVRF-asyn}, 
let $\gtil_{k-1}$ be the gradient estimate defined in Eq. \eqref{eqn:gradient_estimate_svrf_asyn},
$\bU_k$ defined as in Eq. \eqref{eqn:general_fw}. If
\[
\expect  \| \nabla F(\bX_{k-1}) - \gtil_{k-1} \|_F < \frac{15 \tau L D }{k + 1 - \tau}
\]
for $k \leq T$. Then for $k \leq T$
\[
\expect [F(\bX_k) - F^*] \leq \frac{(15\tau + 1) \cdot 4LD^2}{k+1}
\]
\end{lemma}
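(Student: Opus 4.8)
The plan is to reuse the master recursion Eq.~\eqref{eqn:pia} verbatim, exactly as in the proof of Theorem~\ref{thm:conv-SFW-asyn}; the only thing that changes is how the residual inner product $\langle \nabla F(\bX_{k-1}) - \gtil_{k-1}, \bU_k - \bX^* \rangle$ is controlled. Instead of bounding that inner product from scratch as in Lemma~\ref{lemma:perturb-asyn-SFW}, here I would feed in the hypothesis directly. By Cauchy--Schwarz and the diameter bound $\|\bU_k - \bX^*\|_F \leq D$ we have $\langle \nabla F(\bX_{k-1}) - \gtil_{k-1}, \bU_k - \bX^* \rangle \leq D\,\|\nabla F(\bX_{k-1}) - \gtil_{k-1}\|_F$, so taking expectations and invoking the assumed inexactness bound gives
\[
\expect \langle \nabla F(\bX_{k-1}) - \gtil_{k-1}, \bU_k - \bX^* \rangle \leq \frac{15\tau L D^2}{k+1-\tau}.
\]

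Substituting this into Eq.~\eqref{eqn:pia} with $\eta_k = 2/(k+1)$ yields the scalar recursion
\[
\expect h_k \leq \frac{k-1}{k+1}\,\expect h_{k-1} + \frac{2}{k+1}\left[ \frac{15\tau L D^2}{k+1-\tau} + \frac{L D^2}{k+1} \right],
\]
where $h_k = F(\bX_k) - F^*$, and I would close the argument by induction on $k$ with target bound $\expect h_k \leq (15\tau+1)\cdot 4LD^2/(k+1)$. The base case $k=1$ follows from smoothness exactly as in the SFW-asyn proof, since $h_1 \leq LD^2/2$ sits comfortably below the claimed bound. For the inductive step I would plug $\expect h_{k-1} \leq (15\tau+1)\cdot 4LD^2/k$ into the recursion, cancel the common factor $LD^2$, multiply through by $k+1$, and reduce the required inequality to
\[
\frac{30\tau}{k+1-\tau} + \frac{2}{k+1} \leq \frac{4(15\tau+1)}{k}.
\]

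The main obstacle is verifying this last inequality, which is where the self-consistency of the constant $15\tau+1$ is decided; it is also where the standing assumption $\tau < k/2$ inherited from Theorem~\ref{thm:SVRF-asyn} enters. Under $\tau < k/2$ one has $k+1-\tau > k/2$, hence $\frac{30\tau}{k+1-\tau} < \frac{60\tau}{k}$, while trivially $\frac{2}{k+1} < \frac{4}{k}$; adding these gives $\frac{60\tau+4}{k} = \frac{4(15\tau+1)}{k}$, which is precisely the target. This pins down the factor: the coefficient $15\tau$ in the hypothesis is effectively doubled by the staleness-shifted denominator $k+1-\tau$, and the additive $+1$ absorbs the smoothness residual $\frac{L\eta_k D^2}{2}$. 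The remaining bookkeeping---tracking the $\frac{k-1}{k+1}$ contraction factor and confirming that the denominators telescope cleanly from $k$ to $k+1$---is routine and mirrors the SFW-asyn computation step for step.
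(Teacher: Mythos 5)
Your proposal is correct and follows essentially the same route as the paper: feed the assumed inexactness bound into the master recursion Eq.~\eqref{eqn:pia} via Cauchy--Schwarz and the diameter bound $\|\bU_k - \bX^*\|_F \leq D$, then close the induction with $\eta_k = 2/(k+1)$, using $\tau < k/2$ to absorb the staleness-shifted denominator $k+1-\tau$. The only (harmless) difference is bookkeeping: the paper's induction actually tracks the slightly stronger denominator $k+2$, whereas you work directly with the $k+1$ stated in the lemma.
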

\begin{proof}
The proof is similar to the proof for Theorem \ref{thm:conv-SFW-asyn}. Let $h_k = F(\bX_k) - F^*$. We prove by induction.

\begin{align*}
\expect \left[ h_T \right]
\leq & \expect \left[ (1-\eta_T) h_{T-1} + \eta_T \left( \langle \nabla F(\bX_{T-1}) - \gtil_{T-1}, \bU_T - \bX^* \rangle  + \frac{L\eta_T D^2}{2} \right) \right] \\
= & \expect \left[ \frac{T-1}{T+1} h_{T-1} \right] + \frac{2}{T+1} \left[ \frac{L D^2 (15 \tau )}{T + 1 - \tau}   + \frac{LD^2}{T+1} \right] \shortrnote{(plug in $m_T$ and $\eta_T$)}  \\
\leq & \frac{T-1}{T+1} \frac{(15\tau+1)4LD^2}{T+1} + \frac{2}{T+1} \left[ \frac{L D^2 (15 \tau )}{T + 1 - \tau}   + \frac{LD^2}{T+1} \right] \shortrnote{(plug in $h_{T-1}$) }  \\
= & \frac{4LD^2}{(T+1)^2}\left[ \left(15\tau+1\right)(T-1) + \frac{1}{2} + \frac{T+1 }{T+1 -\tau}\frac{15}{2}\tau  \right]  \\
\leq & \frac{4LD^2}{(T+1)^2}\left[ \left(15\tau+1\right)(T-1) + \frac{1}{2} + 15\tau \right]  \shortrnote{($\tau < T/2$)}\\
\leq & \frac{4LD^2}{(T+1)^2}\left[ \left(15\tau+1\right)T \right]\\
\leq & \frac{(15\tau+1)4LD^2}{(T+2)}
\shortrnote{(since $\frac{T}{(T+1)^2} < \frac{1}{T+2}$)}\\
\end{align*} 
\end{proof}

The remain task is to setup an upper bound for $\| \nabla F(\bX_{k-1}) - \gtil_{k-1} \|_F$.

\begin{lemma}
Under the same assumptions as in Theorem \ref{thm:SVRF-asyn}, 
let $\gtil_{k-1}$ be the gradient estimate defined in Eq.\eqref{eqn:gradient_estimate_svrf_asyn},
$\bU_k$ defined as in Eq.\eqref{eqn:general_fw}, then we can bound the inexactness of the gradient estimation for SVRF-asyn:
\[\expect  \| \nabla F(\bX_{k-1}) - \gtil_{k-1} \|_F < \frac{L D (15 \tau )}{k + 1 - \tau}\]
\end{lemma}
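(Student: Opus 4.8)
The plan is to mirror the decomposition used in Lemma~\ref{lemma:perturb-asyn-SFW}, splitting the gradient error into a \emph{staleness} part and a \emph{variance} part, except that the variance part now concerns the variance-reduced estimator rather than a plain stochastic gradient. Concretely, I would insert $\nabla F(\bX_{k-\tau})$ and write
\begin{align*}
\|\nabla F(\bX_{k-1}) - \gtil_{k-1}\|_F
\leq \|\nabla F(\bX_{k-1}) - \nabla F(\bX_{k-\tau})\|_F + \|\nabla F(\bX_{k-\tau}) - \gtil_{k-1}\|_F,
\end{align*}
so the first term is pure staleness and the second is the deviation of the variance-reduced estimator, evaluated at the stale iterate $\bX_{k-\tau}$, from the true gradient there. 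For the staleness term the argument is identical to the one already carried out in Lemma~\ref{lemma:perturb-asyn-SFW}: telescoping $\bX_{k-1} - \bX_{k-\tau}$ over the intermediate updates, then applying $L$-smoothness, Eq.~\eqref{eqn:general_fw}, and the diameter bound $D$ yields $\|\nabla F(\bX_{k-1}) - \nabla F(\bX_{k-\tau})\|_F \le L\tau\eta_{k-\tau}D = \frac{2L\tau D}{k+1-\tau}$ once $\eta_{k-\tau} = 2/(k-\tau+1)$ is substituted.

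For the variance term, observe that $\gtil_{k-1} - \nabla F(\bX_{k-\tau})$ is the empirical average over $S_a$ of the zero-mean terms $\nabla f_i(\bX_{k-\tau}) - \nabla f_i(\bW) + \nabla F(\bW) - \nabla F(\bX_{k-\tau})$. Since the samples in $S_a$ are drawn independently, the variance of the average scales like $1/m_{k-\tau}$, and the restated Lemma~1 of \citep{hazan2016variance} (with $\bX = \bX_{k-\tau}$) controls the second moment of a single summand. This gives
\begin{align*}
\expect\|\nabla F(\bX_{k-\tau}) - \gtil_{k-1}\|_F^2 \le \frac{6L}{m_{k-\tau}}\left(2\,\expect[F(\bX_{k-\tau}) - F^*] + \expect[F(\bW) - F^*]\right),
\end{align*}
and Jensen's inequality converts this second-moment bound into a bound on $\expect\|\cdot\|_F$.

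The crux, and the main obstacle, is closing the loop: the right-hand side above involves the optimality gaps $\expect[F(\bX_{k-\tau}) - F^*]$ and $\expect[F(\bW) - F^*]$, whereas the preceding lemma bounds those gaps only \emph{assuming} the present gradient bound already holds. I would therefore run a joint induction: assuming the convergence bound $\expect[F(\bX_j) - F^*] \le \frac{(15\tau+1)4LD^2}{j+1}$ for all $j < k$ (which the previous lemma delivers from the gradient bound at those earlier indices), I bound both gap terms by $O\!\left(\frac{(15\tau+1)LD^2}{k+1-\tau}\right)$, using that $\bX_{k-\tau}$ and the snapshot $\bW$ are both strictly earlier iterates. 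Substituting this into the variance bound yields $\expect\|\nabla F(\bX_{k-\tau}) - \gtil_{k-1}\|_F \le \sqrt{\frac{c\,(15\tau+1)L^2D^2}{m_{k-\tau}(k+1-\tau)}}$, and choosing the batch schedule $m_{k-\tau}$ to grow proportionally to $(k+1-\tau)/\tau$ (up to constants) forces this below $\frac{13\tau LD}{k+1-\tau}$. Adding the staleness contribution $\frac{2L\tau D}{k+1-\tau}$ then produces the claimed $\frac{15\tau LD}{k+1-\tau}$.

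The delicate points I expect to spend the most care on are twofold: first, verifying that the snapshot gap $\expect[F(\bW)-F^*]$ inherits the same $O(1/(k+1-\tau))$ control under the epoch schedule, since $\bW$ is fixed across an epoch and must be related to a sufficiently recent iterate; and second, checking that the batch-size schedule selected here is consistent with the one implicitly assumed when the convergence lemma substitutes $m_T$ and $\eta_T$, so that the two inductions genuinely interlock rather than demanding incompatible growth rates for $m_{k-\tau}$.
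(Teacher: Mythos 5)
Your proposal is correct, and it takes a genuinely more direct route than the paper. The paper first introduces an auxiliary quantity $\nabla_{T-1}$ --- the variance-reduced gradient evaluated at the \emph{current} iterate $\bX_{T-1}$ with a fresh sample set --- splits the error as $\|\nabla F(\bX_{T-1}) - \nabla_{T-1}\|_F + \|\nabla_{T-1} - \gtil_{T-1}\|_F$, and then decomposes the second term into three further pieces by adding and subtracting $\nabla F(\bX_{T-1})$ and $\nabla F(\bX_{T-\tau})$. That round trip through the synchronous estimator costs two extra copies of the $4\tau$ variance term, so the paper's accounting lands at roughly $14\tau$, just under the claimed $15\tau$. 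Your two-term split --- staleness $\|\nabla F(\bX_{k-1}) - \nabla F(\bX_{k-\tau})\|_F \le 2L\tau D/(k+1-\tau)$ plus the deviation of the variance-reduced estimator at the stale point, bounded via the restated Lemma~1 of \citep{hazan2016variance} applied at $\bX_{k-\tau}$ --- reaches the same destination with a constant closer to $6\tau$ and no auxiliary estimator. The load-bearing ingredients are identical in both arguments: the interlocking induction with the convergence lemma (your ``joint induction'' is exactly how the paper closes the loop, alternating between the gradient-error bound at index $k$ and the optimality-gap bound it implies), the $1/m_{k-\tau}$ variance scaling, the snapshot control $\expect[F(\bW)-F^*] \le LD^2/2^t \le 8LD^2/(T+1)$, and a batch size growing like $(k+1-\tau)/\tau$ --- all of which match the paper's choices, so the two delicate points you flag do resolve consistently. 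The only thing your write-up leaves implicit is the final constant bookkeeping, but the slack you have ($6\tau$ versus $15\tau$) makes that a formality.
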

\begin{proof}
For notation simplicity, we denote $\nabla_{k-1}$ as the variance reduced gradient. That is,
\[
\nabla_{k-1} = \frac{1}{m_{k-1}} \sum_{i \in S_b} \left( \nabla f_i(\bX_{k-1}) - \nabla f_i(\bW) \right) + \nabla F(\bW)
\]
We show this lemma by induction. The $k=1$ case is obvious. Suppose for $k < T-1$,
\[\expect  \| \nabla F(\bX_{k-1}) - \gtil_{k-1} \|_F < \frac{L D (15 \tau)}{k + 1 - \tau}.\]
By above lemma we have
\[\expect [F(\bX_k) - F(\bW^*)] \leq \frac{(15\tau + 1) \cdot 4LD^2}{k+1}.
\]

Then
\begin{align*}
    \| \nabla F(\bX_{T-1}) - \gtil_{T-1} \|_F  \leq  \|\nabla F(\bX_{T-1}) - \nabla_{T-1}\|_F + \|\nabla_{T-1} - \gtil_{T-1}\|_F. 
\end{align*}

Bounding $\|\nabla F(\bX_{T-1}) - \nabla_{T-1}\|_F$:
\begin{align*}
    & \expect \|\nabla F(\bX_{T-1}) - \nabla_{T-1}\|_F^2 \\
    \leq & \frac{6L}{m_{T-1}} \left(2 \expect [F(\bX_{T-1}) - F(\bW^*)] + \expect[F(\bW) - F(\bW^*)] \right)\\
    \leq & \frac{6L}{m_{T-1}} \left( \frac{8LD^2}{T+1} + \frac{LD^2}{2^t} \right) \cdot (15\tau + 1)\\
    \leq & \frac{6L}{m_{T-1}} \left( \frac{8LD^2}{T+1} + \frac{8LD^2}{T+1} \right)  \cdot (15\tau + 1)\\
    = & \frac{L^2 D^2}{(T+1)^2}  \cdot (15\tau^2 + \tau) \\
    \leq &  \frac{L^2 D^2}{(T+1)^2} \cdot 16 \tau^2
\end{align*}

Bounding $\|\nabla_{T-1} - \gtil_{T-1}\|_F$:
\begin{align*}
    &\expect \|\nabla_{T-1} - \gtil_{T-1}\|_F \\
    = & \expect \|\nabla_{T-1} - \nabla F(\bX_{T-1}) + \nabla F(\bX_{T-\tau}) - \gtil_{T-1} + \nabla F(\bX_{T-1}) - \nabla F(\bX_{T-\tau})\|_F \\
    \leq & \expect \|\nabla_{T-1} - \nabla F(\bX_{T-1}) \|_F + \expect \| \nabla F(\bX_{T-\tau}) - \gtil_{T-1}\|_F + \expect \| \nabla F(\bX_{T-1}) - \nabla F(\bX_{T-\tau})\|_F \\
    \leq & \frac{L D}{(T+1)}  \cdot \sqrt{16\tau^2} + \frac{L D}{(T+1 - \tau)} \cdot \sqrt{16\tau^2} + \frac{2 L D \tau}{(T+1 - \tau)} \\
    \leq & \frac{L D}{(T+1-\tau)}  \cdot (10\tau)  \\
\end{align*}
where the first inequality follows from the triangular inequality and the second inequality follows from Jensen's inequality, the bound of $\|\nabla F(\bX_{T-1}) - \nabla_{T-1}\|_F$, and the intermediate result in Lemma \ref{lemma:perturb-asyn-SFW}.

Therefore
\begin{align*}
    & \expect \| \nabla F(\bX_{T-1}) - \gtil_{T-1} \|_F  \\
    \leq & \expect \|\nabla F(\bX_{T-1}) - \nabla_{T-1}\|_F + \expect \|\nabla_{T-1} - \gtil_{T-1}\|_F \\
    \leq & \frac{L D}{(T+1)}  \cdot 4\tau + \frac{L D}{(T+1-\tau)}  \cdot 10 \tau \\
    < & \frac{L D(15\tau )}{T+1 - \tau} 
\end{align*}
where the first inequality follows from the triangular inequality and the second inequality follows from Jensen's inequality.
\end{proof}

By combining the above two Lemma, Theorem \ref{thm:SVRF-asyn} is easily established.

\subsection{Convergence with constant batch size}
\begin{proof}[Proof of Theorem \ref{thm:conv-SFW-constant-batch}]

We prove by induction. The case when $k=1$ is obvious. Suppose the theorem is true for $i \leq k-1$, then for $i=k$,
\begin{align*}
    \expect \left[ h_k \right] 
    \leq& \expect \left[ (1-\eta_k) h_{k-1} + \eta_k \left( \langle \nabla F(\bX_{k-1}) - \gtil_{k-1}, \bU_k - \bX^* \rangle  + \frac{L\eta_k D^2}{2} \right) \right]\\
    \leq& \expect \left[ (1-\eta_k) h_{k-1} + \eta_k \left( \frac{GD}{\sqrt{m}}  + \frac{L\eta_k D^2}{2} \right) \right]\\
    \leq & \frac{LD^2}{k+1} \left[ (k-1)\frac{\expect \left[h_{k-1}\right]}{LD^2} + \frac{2}{c} + \frac{2}{k+1} \right] \shortrnote{(plug in $m$ and $\eta_k$)} \\
    \leq& \frac{LD^2}{k+1} \left[ \frac{4(k-1)}{k+1} + \frac{k-1}{c} + \frac{2}{c} + \frac{2}{k+1} \right] \\
    =& \frac{LD^2}{k+1} \left[ \frac{4k-2}{k+1} + \frac{k+1}{c}  \right] \\
    \leq& \frac{4LD^2}{k+2} + \frac{1}{c} LD^2 \shortrnote{(since $\frac{k}{(k+1)^2} < \frac{1}{k+2}$)}\\
\end{align*}
\end{proof}

\begin{proof}[Proof of Theorem \ref{thm:conv-SFW-asyn-constant-batch}]

We prove by induction. The case when $k=1$ is obvious. Suppose the theorem is true for $k \leq T-1$, then for $k=T$,
\begin{align*}
h_T \leq &(1-\eta_T) h_{T-1} + \eta_T \left[ \langle \nabla F(\bX_{T-1}) - \gtil_{T-1}, \bU_T - \bX^* \rangle  + \frac{L\eta_T D^2}{2} \right] \\
\leq & (1-\eta_T) h_{T-1} + \eta_T \left[ \frac{GD}{\sqrt{m}} + L \tau \eta_{T-\tau} D^2  + \frac{L\eta_T D^2}{2} \right] \shortrnote{(similar to Lemma \ref{lemma:perturb-asyn-SFW})}  \\
= & \frac{T-1}{T+1} h_{T-1} + \frac{2}{T+1} \left[ \frac{\tau LD^2}{c} + \frac{2\tau LD^2}{T+1 -\tau}   + \frac{LD^2}{T+1} \right] \shortrnote{(plug in $m = \frac{c^2G^2}{L^2D^2}$ and $\eta_T$)}  \\
\leq & \frac{T-1}{T+1} \frac{(4\tau+1)2LD^2}{T+1} + \frac{T-1}{T+1} \frac{\tau LD^2}{c} + \frac{2}{T+1} \left[ \frac{\tau LD^2}{c} + \frac{2\tau LD^2}{T+1 -\tau}   + \frac{LD^2}{T+1} \right]   \\
= & \frac{4LD^2}{(T+1)^2}\left[ \left(2\tau+\frac{1}{2}\right)(T-1) + \frac{1}{2} + \frac{T+1 }{T+1 -\tau}\tau \right] + \frac{\tau LD^2}{c} \\
\leq & \frac{4LD^2}{(T+1)^2}\left[ \left(2\tau+\frac{1}{2}\right)(T-1) + \frac{1}{2} + 2\tau \right]  + \frac{\tau LD^2}{c}  \shortrnote{($\tau < T/2$)}\\
= & \frac{4LD^2}{(T+1)^2}\left[ \left(2\tau+\frac{1}{2}\right)T \right]  + \frac{\tau LD^2}{c}  \\
\leq & \frac{(4\tau+1)2LD^2}{(T+2)}  + \frac{\tau LD^2}{c}
\shortrnote{(since $\frac{T}{(T+1)^2} < \frac{1}{T+2}$)}\\
\end{align*} 
\end{proof}

\begin{proof}[Proof of Corollory \ref{coro:total_complexity}]

By Eq. \eqref{eqn:conv-SFW-asyn-constant-batch}, 
let 
$$\frac{(4\tau + 1)\cdot 2 L D^2}{k+2} + \frac{\tau}{c} LD^2 \leq \epsilon$$,
solve for $k$, and one can obtain the iteration bounds.
For each iteration, we need 
$$\Ocal(c^2/\tau^2)$$
stochastic gradient iteration. 
Multiply the iteration bound by $$\Ocal(c^2/\tau^2)$$
and we can see how much stochastic gradient evaluations that we need.
The number of linear optimization equals the number of total iterations.
\end{proof}

\newpage
\section{Asynchronous Stochastic Variance Reduced Frank Wolfe}

\begin{algorithm}[t]
\caption{Naive Asynchronous Stochastic Variance Reduced Frank-Wolfe Method (SVRF-asyn) (Only for analysis, not implementation) }
\begin{algorithmic}[1]
\STATE // The Master Node 
\STATE \textbf{Input:} Max delay tolerance $\tau$; Max iteration count $T$; max inner-iteration counts $N_k$; Step size $\eta_t$ and batch size $m_t$
\STATE \textbf{Initialization:} Randomly initialize with $\|\bX_0\|_*=1$ and broadcast $\bX_0$.
\FOR{iteration $k=0,1,\cdots, T$}
    \STATE
    $\bX_{0} = \bW_k$, $t_m = 1$
    \WHILE{$t_m \leq N_k$}
        \STATE
        Wait until received $\{ \bU_w, t_w\}$ from a worker $w$.
        \STATE
        if $t_m - t_w > \tau$, abandon $\bU_w$ and continue.
        \STATE
        $t_m = t_m + 1$
        \STATE
        $\bX_{t_m} \leftarrow \eta_{t_m} \bU_{w} + (1-\eta_{t_m}) \bX_{t_m - 1}$
    \ENDWHILE
    \STATE
    $\bW_{k+1} = \bX_{N_k}$
    \STATE
    Broadcast $\bW_{k+1}$ and the update-$W$-signal
\ENDFOR
\STATE // For each worker $w=1,2,\cdots, W$
\WHILE{No Stop Signal}
    \IF{Update-$W$-signal}
        \STATE 
        Update the local copy of $\bW$ and Compute $\nabla F (\bW) $
    \ENDIF
    \STATE 
    Receive from the Master $\bX_{t_m}$.
    \STATE
    // $t_m$ is the inner iteration count at the master node.
    \STATE
    $t_w = t_m$, $\bX_w = \bX_m$.
    \STATE
    // Update the local copy of $\bX$ and iteration count.
    \STATE
    Randomly sample an index set $S$ where $|S| = m_{t_w}$
    \STATE 
    $\nabla_w = \frac{1}{m_{t_w}}\sum_{i \in S} \left(\nabla f_i (\bX_{t_w}) - \nabla f_i (\bW)\right) + \nabla F (\bW)$ // the variance reduced minibatch gradient
    \STATE 
    $\bU_w \leftarrow \textrm{argmin}_{\|\bU\|_*\leq \theta} \langle \nabla_w , \bU \rangle$ 
    \STATE
    send $\{\bU_w, t_w\}$ to the Master node.
\ENDWHILE
\end{algorithmic}
\label{alg:SVRF-asyn-naive}
\end{algorithm}

\begin{algorithm}[t]
\caption{Asynchronous Stochastic Variance Reduced Frank-Wolfe Method (SVRF-asyn)}
\begin{algorithmic}[1]
\STATE // The Master Node
\STATE \textbf{Input:} Max delay tolerance $\tau$; Max iteration count $T$; max inner-iteration counts $N_k$; Step size $\eta_t$ and batch size $m_t$
\STATE \textbf{Initialization:} Randomly initialize $\bX_0$ = $\bu_0 \bv_0^T$ s.t. $\|\bX_0\|_*=1$ and broadcast $\{\bu_0, \bv_0\}$ to all the workers; The iteration count at the master node $t_m = 0$.
\FOR{iteration $k=0,1,\cdots, T$}
    \STATE
    $\bX_{0} = \bW_k$, $t_m = 1$ // Maintain a local copy for output
    \WHILE{$t_m \leq N_k$}
        \STATE
        Wait until received from a worker $\{ \bu_w, \bv_w, t_w\}$.
        \IF{$t_m - t_w > \tau$}
            \STATE
            Send $(\bu_{t_m}, \bv_{t_m}), ... (\bu_{t_w+1}, \bv_{t_w+1})$ to node $w$.
            \STATE
            \textbf{continue}.
        \ENDIF
        \STATE
        $t_m = t_m + 1$ and store $\{ \bu_w, \bv_w\}$ as $\bu_{t_m}$ and $\bv_{t_m}$ 
        \STATE
        Send $(\bu_{t_m}, \bv_{t_m}), ... (\bu_{t_w+1}, \bv_{t_w+1})$ to node $w$.
        \STATE
        $\bX_{t_m} \leftarrow \eta_{t_m} \bu_{t_m} \bv_{t_m}^T + (1-\eta_{t_m}) \bX_{t_m-1}$ 
        \STATE
        // Not run in real time
        \STATE
        // Maintain a local copy for output
    \ENDWHILE
    \STATE
    $\bW_{k+1} = \bX_{N_k}$ // Maintain a local copy for output
    \STATE
    Send update-$W$-signal to all workers.
    \STATE
    Send $(\bu_N, \bv_N), ...$ to all workers.
\ENDFOR
\STATE // For each worker $w=1,2,\cdots, W$
\WHILE{No Stop Signal}
    \IF{Update-$W$-signal}
        \STATE 
        Obtain $(\bu_N, \bv_N), ...$ from the master 
        \STATE 
        Update the local copy of $\bX$ to $\bX_N$ as in Eqn \ref{eqn:update_to_k}
        \STATE 
        $\bW \leftarrow \bX_N$ and Compute $\nabla F (\bW) $
    \ELSE
        \STATE 
        Obtain $(\bu_{t_m}, \bv_{t_m}), ... (\bu_{t_w+1}, \bv_{t_w+1})$ from the master node.
        \STATE 
        Update the local copy of $\bX_{t_w}$ to $\bX_{t_m}$  according to Eqn \ref{eqn:update_to_k}
    \ENDIF
    \STATE
    Randomly sample an index set $S$ where $|S| = m_{t_w}$
    \STATE 
    $\nabla_w = \frac{1}{m_{t_w}}\sum_{i \in S} \left(\nabla f_i (\bX_{t_w}) - \nabla f_i (\bW)\right) + \nabla F (\bW)$ 
    \STATE 
    $\bu_w \bv_w^T \leftarrow \textrm{argmin}_{\|\bU\|_*\leq \theta} \langle \nabla_w , \bU \rangle$ 
    \STATE
    send $\{\bu_w, \bv_w, t_w\}$ to the Master node.
\ENDWHILE
\end{algorithmic}
\label{alg:SVRF-asyn}
\end{algorithm}

In this section we describe how to run SVRF asynchronously, and in a communication efficient way.
Similar to Section \ref{sec:algo}, we begin with a naive asynchronous SVRF, as in Algorithm \ref{alg:SVRF-asyn-naive}.
The core idea is to run the inner iteration of SVRF asynchronously.

And then we describe how to make the naive asynchronous SVRF communication efficient. We reduce the per-iteration communication cost of inner iterations to $\mathcal{O} (n)$, as what we achieved in SFW-asyn.
\section{Distributed Computational Cluster Setup}

In this section we describe how to perform the experiments in Section \ref{sec:simulation} on Amazon AWS.

We use MIT StarCluster software as the heavy-lifting tool for AWS cluster management. 
We launch $15$ AWS \texttt{M1.SMALL} instances as the worker nodes, and $1$ AWS \texttt{M1.LARGE}, and connect these machines with a virtual private network.

We use \texttt{MPI4PY} \citep{dalcin2008mpi} as the fundamental APIs to implement distributed algorithms. \texttt{MPI4PY} is built on-top-of the Message Passing Standard, and is capable of implementing asynchronus distributed algorithms

\newpage
\section{More simulation results}

In order to better understand how much speedup asynchrony offers, and the conditions that the speedup occur, we test SFW-asyn against SFW-dist for matrix sensing and PNN under a distributed computational modeled by queuing theory. 

Queuing model is frequently used to model the staleness of each workers in distributed computational setting \cite{mitliagkas2016asynchrony}. 
We consider each $D_1 D_2$ operation takes one unit of time in expectation. 
Therefore, each stochastic gradient evaluation of matrix sensing and PNN takes one unit of time in expectation; 
we solve the 1-SVD up to a practical precision \cite{allen2017bfw}, 
and we consider 1-SVD takes ten units of time in expectation. 
In our simulation we find that setting the expected time of 1-SVD as ten or twenty or five has marginal impact on the results. 
We further assume that the computation time follows a geometric distribution:

\begin{assumption}
Denote random variable $t$ as the computation time required for each worker to finish a computation task that takes $C$ units of time in expectation. Then for $x = C, 2C, ... n C$, $\mathbb{P}(t = x)=p(1-p)^{x/C-1}$ for a distribution parameter $p$.
\end{assumption}

The intuition behind the staleness parameter $p$ is that, 
when $p$ is set to $1$, there is no randomness - each worker finish the work in the exactly same amount of time.
When the staleness parameter is small, say, $0.1$, 
then the computational time of each worker differs a lot - some may finish their jobs faster, while some are slower.

We want to emphasize that 

(1) this assumption is \textbf{not required} for our convergence results (Theorem \ref{thm:conv-SFW-asyn} and \ref{thm:SVRF-asyn}) to hold.

(2) the cost of communication is not taken into consideration, and hence we are implicitly favoring sfw-dist.

We show the convergence VS simulated time in Fig \ref{fig:simulated_time_append}, and the speedup over single worker in in Fig \ref{fig:simulated_speedup_append}. 
As in Fig \ref{fig:simulated_speedup_append}, the speedup of SFW-asyn is almost linear, while SFW-dist compromises as the number of workers get larger.

The improvement over SFW-dist is less significant, as we increase the staleness parameter from $p=0.1$ to $p=0.8$.
Obviously, SFW-dist performs better on large staleness parameter.
When the staleness parameter is closer to one, the machines performs more uniformly (i.e., they finish the mini-batch computation in the similar amout of time), and therefore the slow-dowm of SFW-dist due to slowest worker is less significant.
However, the performance of SFW-asyn also compromises slightly, as we go from $p=0.1$ to $p=0.8$.
That is, SFW-asyn slightly prefer random delay, rather than consistent delay.
This means that our analysis based on worst case scenario is loose, and can be improved.

\begin{figure*}[thb]
\begin{center}
\includegraphics[width=0.24\linewidth]{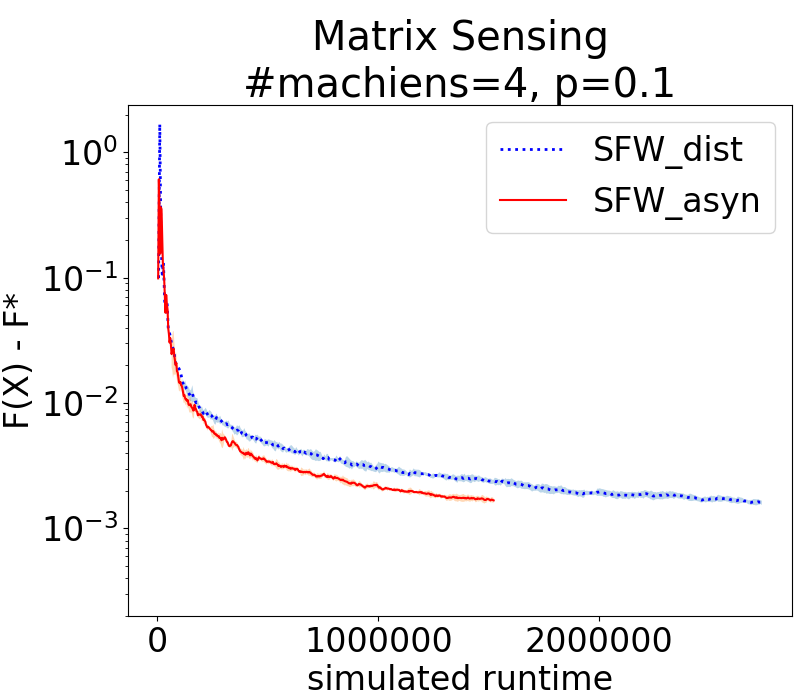}
\includegraphics[width=0.24\linewidth]{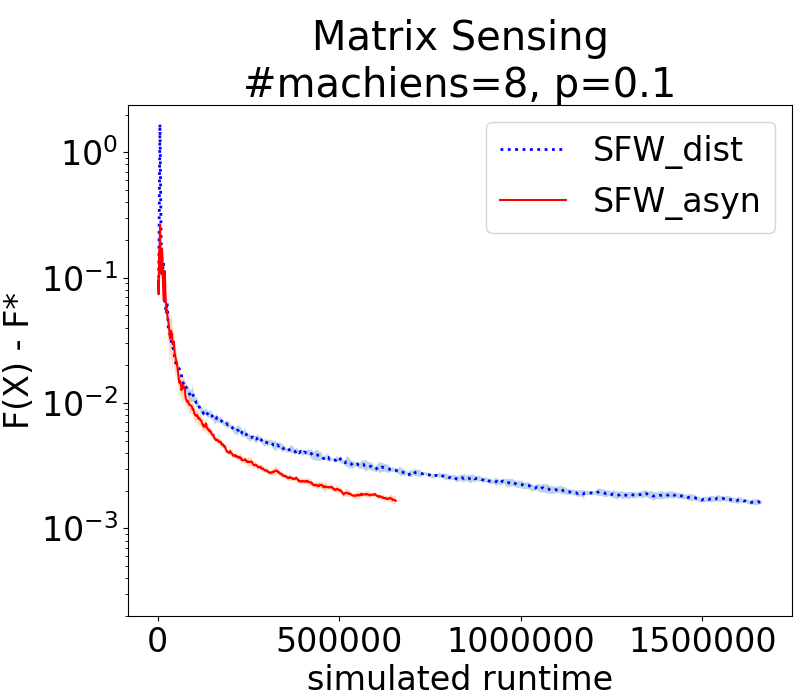}
\includegraphics[width=0.24\linewidth]{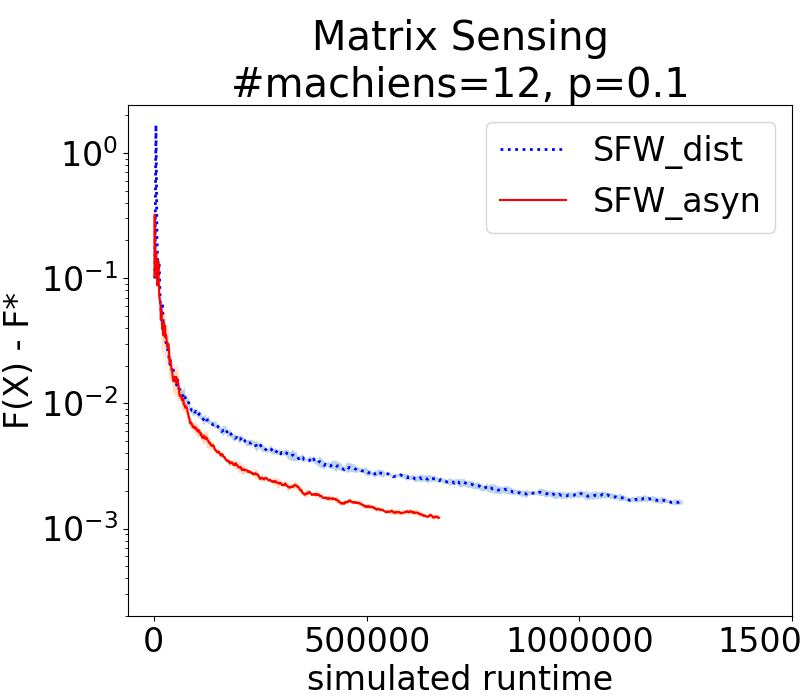}
\includegraphics[width=0.24\linewidth]{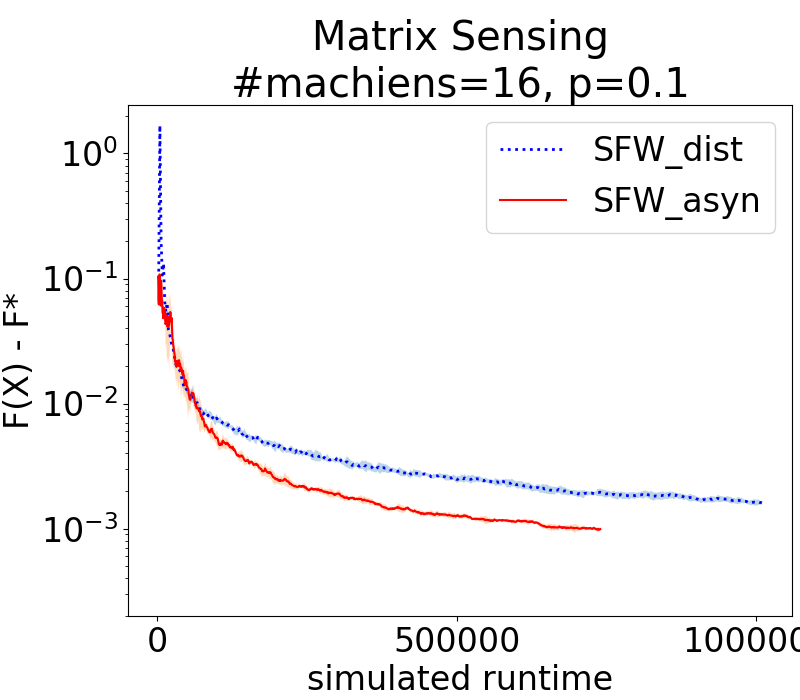}
\\ \vspace{5pt}
\includegraphics[width=0.24\linewidth]{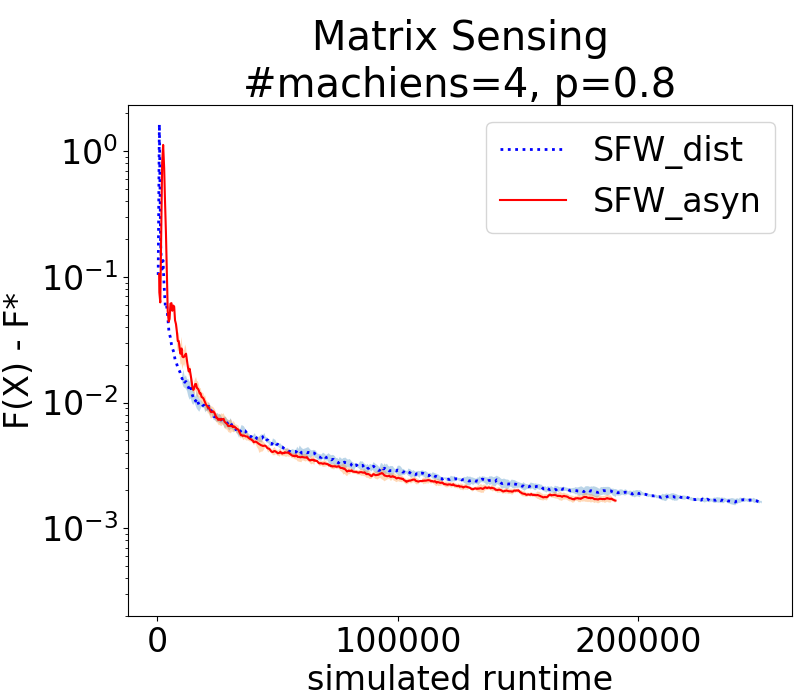}
\includegraphics[width=0.24\linewidth]{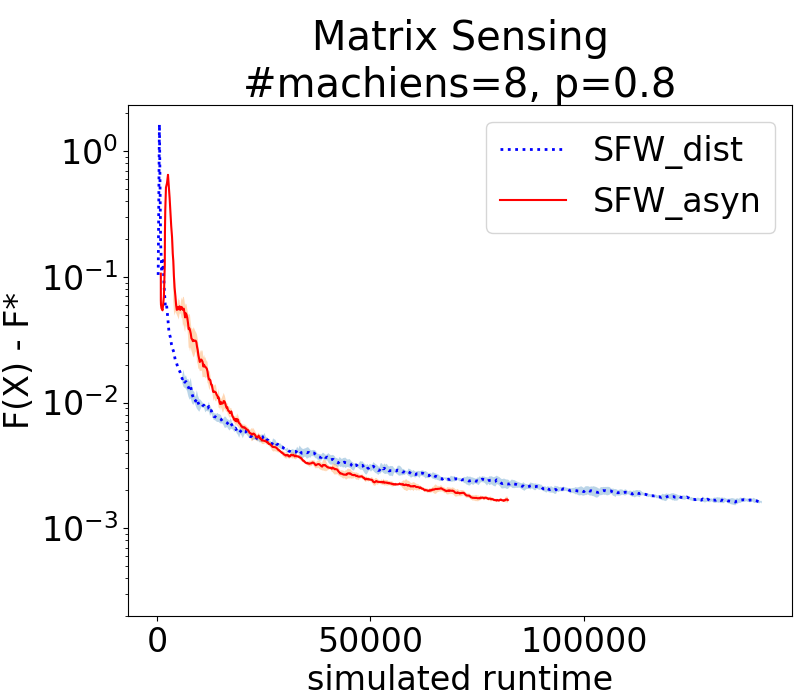}
\includegraphics[width=0.24\linewidth]{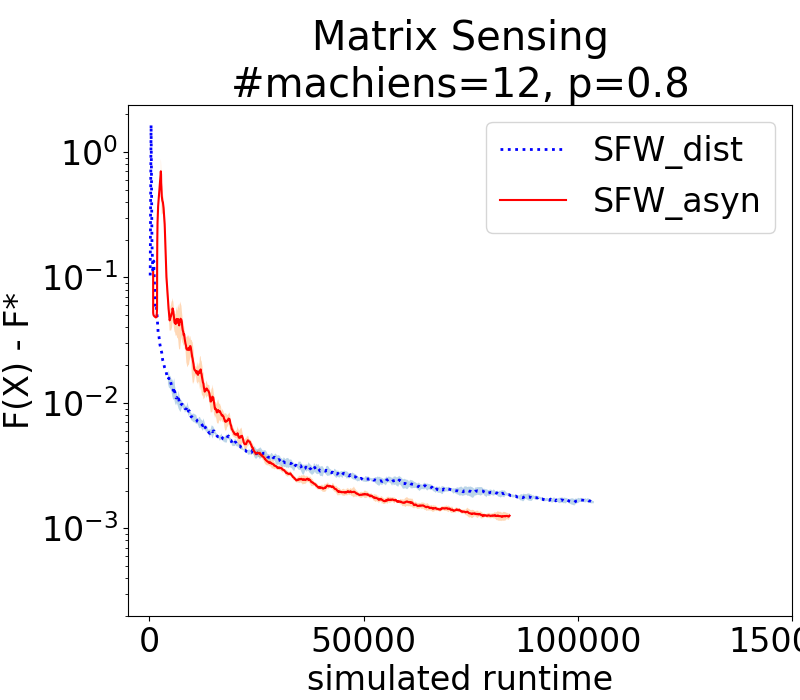}
\includegraphics[width=0.24\linewidth]{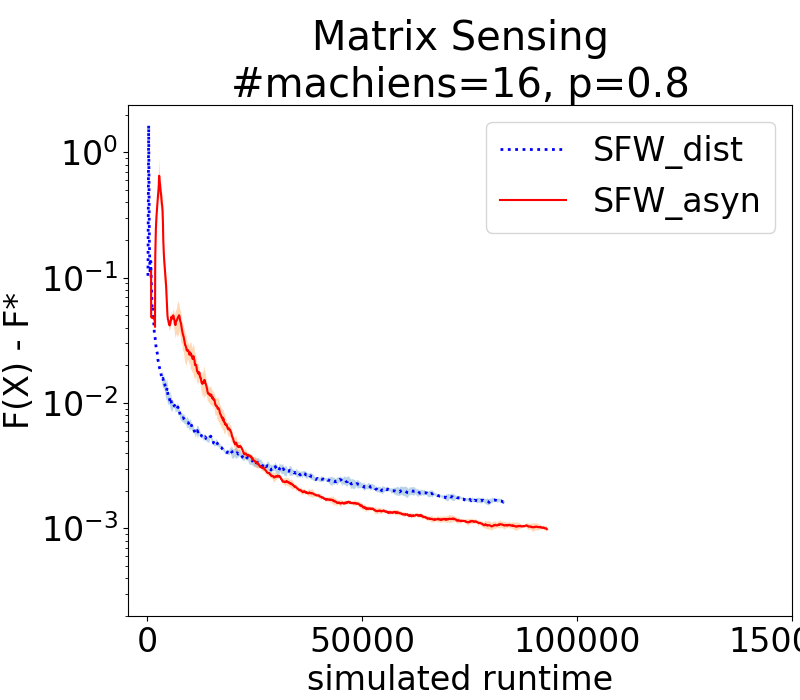}
\end{center}
\caption{Convergence of the relative loss of Matrix Sensing problem VS the time simulated by queuing model. Simulation are repeated $5$ times, and $1$ standard deviation is shown in the colored shadow.}
\label{fig:simulated_time_append} 
\end{figure*}

\begin{figure}[th]
\begin{center}
\includegraphics[width=0.37\linewidth]{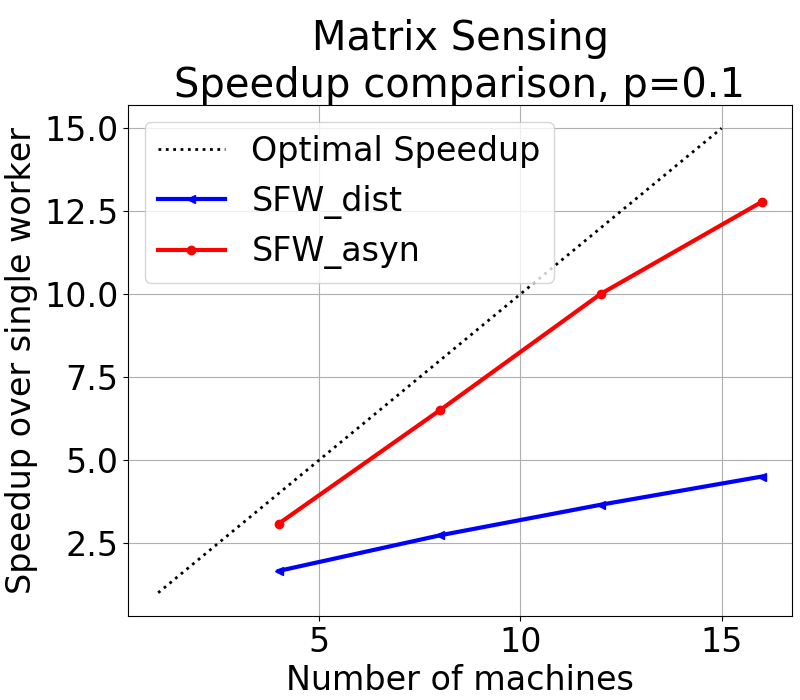}
\hspace{20pt}
\includegraphics[width=0.37\linewidth]{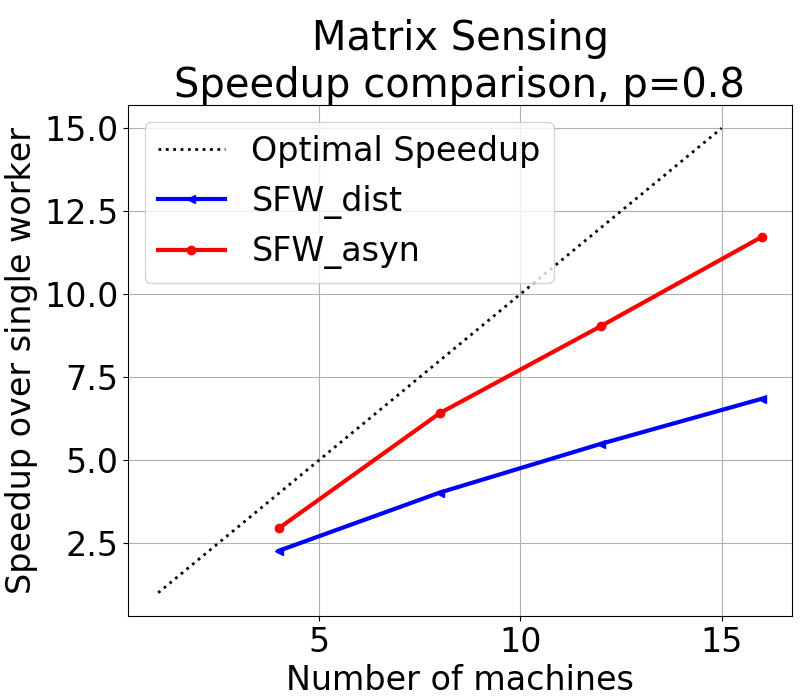}
\end{center}
\caption{Comparing the time needed to achieve the same relative error ($0.002$) against single worker.}
\label{fig:simulated_speedup_append} 
\end{figure}

\end{document}